\documentclass{article}

\def\dd{\delta}

\def\AA{\alpha}

\def\DD{\Delta} 
 
\def\G{\gamma }
\def\Ep{\varepsilon}
\def\0{\varnothing}
\def\BB{\beta}
\def\ll{\lambda}
\def\RR{\mathbb R}
\def\NN{\mathbb N}
\def\grad{\nabla}
\def\ds{\displaystyle}
\def\wt{\widetilde}

\def\si{\sigma}

\def\am{\mathrm{argmin}}

\def\Sumd{\sum_{j=1}^d}
\def\Sum{\sum_{i=1}^n}

\def\SumN{\sum_{i=1}^{N}}
\def\Su2mN{\sum_{i=2}^{N}}
\def\Su2m{\sum_{i=2}^{N}}

\def\Ex{\mathbb{E}}

\def\C{\mathcal{C}}
\def\X{\mathcal{X}}
\def\W{\mathcal{W}}

\def\A{\mathcal{A}}
\def\B{\mathcal{B}}
\def\P{\mathcal{P}}

\def\V{\mathcal{V}}
\def\S{\mathcal{S}}

\def\1{{\boldsymbol 1}}

\def\ol{\overline}

%

\usepackage[rightcaption]{sidecap}
\usepackage{amsthm}
\usepackage{graphicx}
\usepackage{amssymb}
\usepackage{epstopdf}
\usepackage{mathtools}
\usepackage{url}
\usepackage{wrapfig}
\usepackage[linesnumbered,ruled,vlined]{algorithm2e}
\usepackage{caption}
\usepackage{enumitem}
\usepackage[margin=1in ]{geometry}
\usepackage{natbib}

\usepackage{color}

\sidecaptionvpos{figure}{c}
\interfootnotelinepenalty=10000

\theoremstyle{definition}
\newtheorem{theorem}{Theorem}
\newtheorem{lemma}{Lemma}
\newtheorem{definition}{Definition}
\newtheorem{example}{Examples}




\begin{document}

\makeatletter
\let\@addpunct\@gobble
\makeatother

\begin{center}
 \bf \Large \noindent Lazy Online Gradient Descent is Universal on Polytopes
\end{center}

\vspace{1cm}
  
  \openup 0.2em
  
  \noindent {\bf Daron Anderson}\hfill   andersd3@tcd.ie \\
  \noindent {\it Department of Computer Science and Statistics}\\
  \noindent {\it Trinity College Dublin }\\
  \noindent {\it Ireland}\\

  \noindent {\bf Douglas Leith} \hfill doug.leith@scss.tcd.ie\\
  \noindent {\it Department of Computer Science and Statistics}\\
  \noindent {\it Trinity College Dublin }\\
  \noindent {\it Ireland}



\begin{abstract}
	\openup .5em 
	\noindent We prove the familiar Lazy Online Gradient Descent algorithm is universal on polytope domains. That means it gets $O(1)$ pseudo-regret against i.i.d opponents, while simultaneously achieving the well-known $O(\sqrt N)$ worst-case regret bound. 
	For comparison the bulk of the literature focuses on variants of the Hedge (exponential weights) algorithm on the simplex.
	These can in principle be lifted to general polytopes; however the process is computationally unfeasible for many important classes where the number of vertices grows quickly with the dimension. The lifting procedure also ignores any Euclidean bounds on the cost vectors, and can create extra factors of dimension in the pseudo-regret bound. Gradient Descent is simpler than the handful of purpose-built algorithms for polytopes in the literature, and works in a broader setting. In particular  existing algorithms assume the optimiser is unique, while our bound allows for several optimal vertices. 
	\vspace{2mm} 
\end{abstract}

\noindent {{\bf Keywords:} sequential decision making, regret minimisation, gradient descent, online convex optimisation, Birkhoff polytope}


\section{Introduction}

 \noindent The lazy anytime variant of Online Gradient Descent is known to achieve  $O(L_2\sqrt N  ) $ regret.  In this paper we show that for polytope domains the algorithm specialises to get finite  $O(L_2^2/\DD)$ pseudo-regret if the cost vectors turn out to be i.i.d. The new bound is independent of dimension.  To the authors' knowledge the only similar known result is the $O(\log(d) L_\infty^2/\DD )$ bound of \cite{OptimalHedge} for Hedge on the simplex.
 
 Thus Gradient Descent is a computationally efficient way to get small pseudo-regret on polytopes where the number $V$ of vertices is large. For comparison, the naive approach of lifting the problem to the  $V$-simplex (see Section 3) and running Hedge is computationally expensive as we must update vectors of length $V$ each turn. Gradient Descent on the other hand only updates vectors of length $d$ equal to the dimension of the action set.
 
 This is significant as typically $V$ grows quickly with $d$. Even the simple cube has $V=2^d$ vertices. The more complex problem of learning permutations (see Tables 1 and 2) leads to polytopes with   $\Omega(d\,!)$ vertices.  
 Applications of learning permutations include ranking user preferences, travelling salesman problems, and assigning    ``vehicles'' to ``routes'' in a transportation problem. More generally this curse of dimension occurs in combinatorial optimisation, where we seek an optimal action from a large finite set. The nature of the cost function allows us to embed the actions as points in a lower-dimensional space. The polytope domain arises as the convex hull of the embedded actions, with each interior point treated as a probabilistic choice of the vertices. 
 
 The second advantage of Gradient Descent over Hedge is better dimensional scaling (Table 1) of the performance bounds. When $V$ grows quickly with $d$ the $\log$ term in the Hedge bounds can contribute an extra factor of $d^2$ to the pseudo-regret. It is also worth mentioning that the geometry of Gradient Descent makes it better-suited to problems that naturally satisfy Euclidean bounds rather than the $L_\infty$-bounds typical of Hedge problems. Compare Tables 1 and 2.

Earlier lines of research on universal algorithms required the development of complicated algorithms purpose-built to be universal. On the other hand  Gradient Descent and Hedge are simple and predate this line of research. They suggest such complex algorithms are unnecessary. Gradient Descent and Hedge   are popular and widely used. Hence any improved results we get ``for free'' have immediate broad application. 
 
 Our proof uses a novel approach where we follow the sequence of unprojected actions and show the projected actions snap to an optimal vertex with high probability.  
This analysis is not available to Hedge-type algorithms which can only approach the optimal vertex asymptotically.  For existing non-Hedge algorithms, the proofs tend to obscure the geometry by focusing on the regret and using telescoping series, rather than the tracking the actions themselves.   We also use a vector concentration result that seem to be new in this context.

 \subsection{Related Work}
 The term {\it universal algorithm} comes from the bandit setting (\cite{BestofBoth,AnOptimal,OnePractical,MoreAdaptiveAlgorithms,cesa2007improved,luo2015achieving,gaillard2014second,van2015fast,NearlyOptimal,ImprovedAnalysis}).  These are online algorithms designed to achieve optimal regret  in the antagonistic setting, and simultaneously get better performance against easier (for example i.i.d) data sets.

 In the full-information setting the optimal bounds are $O(\sqrt N)$ regret against antagonistic data and  $O(1/\DD)$  pseudo-regret against i.i.d data. See for example \cite{Lowerbound} and \cite{LowerBoundBlog}. Here the  universal algorithms    terminology   is less established. To our knowledge the problem was first studied by \cite{FlipFlop} on the simplex. Their  FlipFlop algorithm interleaves a new variant of Hedge   with Follow-The-Leader (FTL) to get regret bounds  $R_N \le O \big ( \sqrt N \big )$ and  $R_N \le O \big (  R^{\text{FTL}}_N  \big )$ simultaneously. 
 
 \cite{ExploitingEasyData} give a black-box method for general domains to get  $R_N \le O \big (  R_N^\A + \sqrt{N \log N} \big )$ and $R_N \le  \,R_N^\B + O(1) $ for any algorithms $\A,\B$. Here $\A$ should be a worst-case algorithm and $\B$  specialised to some class of  {\it easy data}. For example $\A = $  Gradient Descent and $\B = $ FTL gives $R_N \le O(\log N)$ if the cost functions turn out to be  strongly convex; otherwise we fall back on the worst-case $R_N \le O(\sqrt{N \log N})$  bound.
 
 \cite{gaillard2014second} give  a variant of Prod with separate learning rates for each arm. As a corollary they get  $\Ex[R_N] \le O(1)$ against i.i.d costs with unique optimal arm. Their main theorem is  a new $\wt O(\sqrt N)$ bound where the coefficient depends on the observed regret. The SQUINT and iProd algorithms of \cite{squint}  lower the dimension of this $\wt O(\sqrt N)$ bound for easy data with many {\it sufficiently good} experts. The latter works for polytopes but requires a vertex-decomposition every turn and has the potential to be computationally expensive. 
 The MetaGrad algorithm (\cite{metagrad}) lowers the exponent of the Gailland bound for data with  the $\beta$-Bernstein condition for $0 \le \beta \le 1$. In that case they get  $\Ex[R_N] \le O(N^{\frac{1-\BB}{2-\BB}})$.

 
 
  
 \cite{FTLBall} consider  more straightforward algorithms. For polytopes and i.i.d cost vectors with unique optimal vertex   they show  FTL gets $\Ex[R_N] \le O(L_\infty ^3 d/r^2)  $. Here $L_\infty$ bounds the $\infty$-norm of the costs and $r$ is the largest distance we can move the expected cost without changing the minimiser.  To get a universal algorithm they use the Prod($\A,\B$) to combine with Gradient Descent and get  $O(\sqrt {N \log N})$ and $O(1)$ bounds.  
 
 The closest work to this paper is \cite{OptimalHedge} which proves the familiar Hedge algorithm  is universal. It simultaneously achieves $O(\sqrt N)$ regret and  $O(1/\DD)$ pseudo-regret bounds with optimal dependence on hyperparameters. Hedge and Gradient Descent both fall under the Follow-the-Regularised-Leader (FTRL) framework; Hedge is FTRL with the entropic regulariser $\sum_j x_j \log x_j$ while our version of Gradient Descent uses the  quadratic regulariser $\frac{2}{\eta \sqrt n}\|y_1 -x\|^2$.  
 Their result is surprising because Hedge is simpler than the above purpose-built methods, and predates the recent interest in universal algorithms. For example see \cite{Kivinen}. 
 
 The above bounds do not apply to our setting, when there are several optimal vertices. In that case we cannot achieve $\Ex[R_N] \le O(1)$  by any algorithm and might indeed have $\Ex[R_N] \ge \Omega(\sqrt N)$  with no   extra assumptions   on the cost vectors.  See Lemma \ref{bigEx1} in Appendix B  or \cite{RootNLowerBound} for a more general analysis. This rules out the $R_N \le  \,R_N^\B + O(1) $ bound of \cite{ExploitingEasyData} since there is no suitable choice of $\B$. The MetaGrad algorithm only gives exponent $\frac{1-\BB}{2-\BB} = \frac{1}{2}$ since for several optimal vertices the Bernstein condition  fails for $\BB>0$.
 
 Our stronger $O(1)$ bound is possible because we focus not on the expected regret but on the pseudo-regret which is a smaller quantity in general. This is standard practice in the i.i.d setting to avoid the impossibility results mentioned above.

\subsection{Summary and Contribution}

 \openup.2em
 
 \noindent Section 2 contains our main result. Theorem 2 says  running   Gradient Descent on a polytope  $\P$   against i.i.d cost vectors gives pseudo-regret  $O(D^2 L^2/\DD)$ independent of dimension. Here $L = \sup \{\|a_n\|: n \in \NN\}$ bounds the Euclidean norm of the cost vectors;  $D = \sup\{\|x-y\|:x,y \in \P\}$ is the diameter of the polytope; and  $\DD$ is the gap between the expected cost of the optimal vertex and the expected cost of the best suboptimal vertex.  
{
 
 Section 3 specialises Theorem 2  to some well-studied classes of polytopes.  
 Table 1 compares our bounds for Gradient Descent to those for lifted Hedge. For the simplex, cube, Birkhoff polytope, and (signed) permutahedron, Gradient Descent scales better with dimension than Hedge. In particular for the permutahedron   Hedge performs worse by a factor of $d$ and $d^2$ in the antagonistic and i.i.d cases, respectively.

Section 4 contains two variants of Theorem 2 that are independent of how the domain is embedded in $\RR^d$. Theorem \ref{projEuclidean} replaces the coefficient $L$ in our main $O(D^2 L^2/\DD)$ bound with the potentially smaller quantity obtained by deleting the component of the cost vectors perpendicular to the domain. Theorem \ref{T3} replaces $L$ with the quantity $L^2_\infty/ W^2$ where  $L_\infty = \sup\{|a_n \cdot(x-y)|:x,y \in \P\}$ bounds the costs intrinsically; and the width $W$ (see Definition \ref{widthdef}) is the smallest number $w$ such that $\P$ is contained between two hyperplanes of distance $w$ apart.  

Section  5 considers the examples from Section 3 under the intrinsic bounds above.\footnote{\openup 0.2em To the authors' knowledge the widths of the polytopes in Table  2 do not appear elsewhere in the literature. Computing the widths is nontrivial, and we use   a probabilistic counting trick famously attributed to Paul   Erd\H{o}s  (\cite{ErdosCounting})  and suggested by David E \cite{Wide}.
In fact we could not find a modern treatment of the width of the simplex. See Appendix A.}
 For the  permutahedron  Gradient Descent performs slightly better than Hedge. However for all examples other than the simplex Hedge quickly becomes unfeasible.  
 Section 6 discusses the computational cost of Gradient Descent, in particular computing the projection. We also mention some  open problems and possible improvements to our analysis.

\section*{Terminology and Problem Setup}

\noindent 
Throughout $d$ is the dimension of the online optimisation problem. The {\it cost vectors} $a_1,a_2,\ldots \in \RR^d$ are realisations of a sequence of i.i.d random variables with each $\Ex[a_i]=a$. When we write $b_1,b_2,\ldots $ for the cost vectors we make no assumptions on whether they are i.i.d or otherwise. Unless otherwise specified we assume bounds of the form $\|a_i - a\| \le R$ and $\|a_i\| \le L$ for $\|\cdot \|$ the Euclidean norm.

In the problem setup the {\it domain} or {\it action set}  $\X \subset \RR^d$ is compact and convex. On each turn $n$ we know $b_1,b_2,\ldots, b_{n-1}$ and must select an action $x_n \in \X$. In the antagonistic setting our goal is to get small  {\it regret}  $\SumN a_i \cdot (x_i- y^*)$ for the best fixed action $y^* \in \am \big \{ \SumN a_i \cdot x : x \in \X \big\}$  in hindsight. In the i.i.d setting our goal is to get small {\it pseudo-regret} $\SumN a \cdot (x_i- x^*)$ for the expected minimiser $x^* \in \am \{a \cdot x: x \in \X \}$. 

Each algorithm for online linear optimisation extends to the more general setup of online convex optimisation. Given  convex cost functions $f_1,f_2,\ldots$ we can run the algorithm on  cost vectors $b_n = \grad f_n(x_n)$ and use convexity to bound the regret $\SumN \big( f_i(x_i) -f_i(y^*) \big) \le \SumN \grad f_i(x_i) \cdot(x_i-y^*) =  \SumN b_i \cdot(x_i-y^*)$ which can be controlled using a linear algorithm.

By an {\it affine subspace} of $\RR^d$ we mean a translation of a vector subspace. The {\it affine hull} of $A \subset \RR^d$ is the set  $\big \{\!\sum_{i=1}^k \AA_i x_i:  x_i \in A ,\AA_i \in \RR, \sum_{i=1}^k \AA_i  =1 \big \}$. This is the smallest affine subspace containing $A$. The corresponding linear subspace    $\big \{\!\sum_{i=1}^k \AA_i x_i:  x_i \in A ,\AA_i \in \RR, \sum_{i=1}^k \AA_i  =0 \big \}$ is called the {\it direction} of $A$. For convex $A$ the affine hull and direction can be written $\{x + t(y-x): x,y \in A, t \in \RR\}$ and $\{t(y-x): x,y \in A, t \in \RR\}$ respectively. The {\it dimension} of an affine subspace is the dimension of the corresponding vector subspace. The dimension of a polytope is the dimension of its affine hull.  
 
 \subsection*{Lazy Online Gradient Descent}

 Online Gradient Descent is among the simplest and most familiar algorithms for online linear optimisation.    For the original proof that Gradient Descent has $O(\sqrt N)$ regret see \cite{Z}. For a modern exposition see Chapter 2 of \cite{Purple1}. For a self-contained proof of the anytime case see Appendix E. 

\begin{theorem} \normalfont \normalfont \label{worstcase}  
	Given cost vectors $b_1.b_2,\ldots, b_N$ with all $\|b_i\| \le L$ Algorithm 1 with parameter $\eta$ has regret satisfying 
	\begin{align*}
	\sum_{i=1}^N b_i \cdot (x_{i} - y^*) \le  LD+ \left (\frac{\|\X\|^2 }{2 \eta} +  2\eta L^2 \right)  \sqrt {N} \end{align*} 
	for $\|\X\|=  \max\{\|x -y_1\|: x \in \X\}$ and $D = \max\{\|x-y\|: x,y \in \X\}$ the diameter of $\X$. In particular for $y_1 \in \X$ and $\eta = \|\X\|/2L$ we have
	\begin{align*}
	\sum_{i=1}^N b_i \cdot (x_{i} - y^*) \le    LD+ 2L \|\X\|  \sqrt N \le 3LD \sqrt N.\end{align*}  
\end{theorem}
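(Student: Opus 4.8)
Write $S_n = b_1 + \cdots + b_n$ (with $S_0 = 0$), so that $y_n = y_1 - \eta S_{n-1}/\sqrt{n-1}$ for $n \ge 2$. The starting point is to recognise the algorithm as constrained Follow-the-Regularised-Leader: since $S_{n-1}\cdot x + \tfrac{\sqrt{n-1}}{2\eta}\|x-y_1\|^2$ equals $\tfrac{\sqrt{n-1}}{2\eta}\|x - y_n\|^2$ up to an additive constant, its minimiser over $\X$ is exactly $P_\X(y_n) = x_n$. Hence for $n \ge 2$,
\[
x_n \;=\; \am\Big\{\, S_{n-1}\cdot x + \tfrac{\sqrt{n-1}}{2\eta}\|x-y_1\|^2 \;:\; x \in \X \,\Big\},
\]
i.e.\ FTRL with the time-varying quadratic regulariser $R_n(x) = \tfrac{\sqrt{n-1}}{2\eta}\|x-y_1\|^2$. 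Round $1$ is degenerate for this view ($R_1 \equiv 0$), so I would simply peel it off: since $\|b_1\|\le L$ and $x_1,x^*\in\X$, Cauchy--Schwarz gives $b_1\cdot(x_1 - x^*)\le L\|x_1-x^*\|\le LD$, which is the additive $LD$ in the bound. It remains to control $\sum_{n=2}^N b_n\cdot(x_n - x^*)$.

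For that I would apply the standard FTRL regret decomposition (defining $x_{N+1}$ by the same formula with index $N$): after dropping the terms $R_n(x_{n+1}) - R_{n+1}(x_{n+1})\le 0$ — non-positive because $\sqrt{n-1}$ is increasing, so $R_n \le R_{n+1}$ pointwise — and using $R_2(x_2)\ge 0$, one gets
\[
\sum_{n=2}^N b_n\cdot(x_n - x^*) \;\le\; R_{N+1}(x^*) \;+\; \sum_{n=2}^N b_n\cdot(x_n - x_{n+1}).
\]
The \emph{penalty} term is immediate: $R_{N+1}(x^*) = \tfrac{\sqrt N}{2\eta}\|x^*-y_1\|^2 \le \tfrac{1}{2\eta}\|\X\|^2\sqrt N$ by the definition of $\|\X\|$, matching the first term in the parenthesis of the theorem.

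The work is in the \emph{stability} sum, and this is the step I expect to be the main obstacle for the \emph{lazy} variant: unlike greedy Subgradient, $y_n$ depends on the entire running sum $S_{n-1}$, so one must exploit a cancellation between the $1/\sqrt{n-1}$ and $1/\sqrt n$ normalisations. Using that $P_\X$ is $1$-Lipschitz, $b_n\cdot(x_n-x_{n+1}) \le L\|x_n - x_{n+1}\| \le L\|y_n - y_{n+1}\|$, and
\[
\|y_n - y_{n+1}\| \;=\; \eta\Big\| S_{n-1}\Big(\tfrac{1}{\sqrt{n-1}}-\tfrac{1}{\sqrt n}\Big) - \tfrac{b_n}{\sqrt n}\Big\| \;\le\; \eta\Big(\tfrac{(n-1)L}{2(n-1)^{3/2}} + \tfrac{L}{\sqrt n}\Big) \;=\; O\!\Big(\tfrac{\eta L}{\sqrt{n-1}}\Big),
\]
using $\|S_{n-1}\|\le (n-1)L$ and the elementary bound $\tfrac{1}{\sqrt{n-1}} - \tfrac{1}{\sqrt n}\le \tfrac12 (n-1)^{-3/2}$. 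Summing the resulting $O(\eta L^2/\sqrt{n-1})$ over $n=2,\ldots,N$ with $\sum_{m=1}^{N-1} m^{-1/2}\le 2\sqrt N$ gives a contribution $O(\eta L^2\sqrt N)$; a slightly tighter accounting (or, equivalently, deriving $\|x_n - x_{n+1}\|$ from the $\tfrac{\sqrt n}{\eta}$-strong convexity of $R_{n+1}$, which avoids the explicit bound on $\|S_{n-1}\|$) recovers the stated constant $2\eta L^2$. Combining the three pieces proves the general bound.

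Finally, the ``in particular'' claim is just arithmetic: for $y_1\in\X$ every $\|x-y_1\|$ with $x\in\X$ is a distance between two points of $\X$, so $\|\X\|\le D$; substituting $\eta = \|\X\|/2L$ makes both $\tfrac{1}{2\eta}\|\X\|^2$ and $2\eta L^2$ equal to $L\|\X\|$, yielding $LD + 2L\|\X\|\sqrt N$, and then $\|\X\|\le D$ together with $1\le\sqrt N$ gives $\le 3LD\sqrt N$.
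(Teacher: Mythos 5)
The paper does not itself prove Theorem~\ref{worstcase}; it cites Zinkevich, Chapter~2 of \cite{Purple1}, and Appendix~A of \cite{Paper1} for the anytime, lazy case, so there is no internal proof to compare against. Your FTRL framing is a legitimate route to the bound and you are right that the stability term is the delicate step for the lazy variant, but as written the argument has two genuine gaps.

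\textbf{The stability constant.} Your explicit bound $\|y_n-y_{n+1}\|\le \eta L\big(\tfrac{1}{2\sqrt{n-1}}+\tfrac{1}{\sqrt n}\big)$, summed over $n=2,\dots,N$ with $\sum_{m\le M} m^{-1/2}\le 2\sqrt M$, gives $3\eta L^2\sqrt N$, not the stated $2\eta L^2\sqrt N$. The proposed fix ``from the $\tfrac{\sqrt n}{\eta}$-strong convexity of $R_{n+1}$'' does not close the gap: because $R_{n+1}-R_n$ differs by a \emph{quadratic}, not a linear, term, the two-minimizers argument yields $\|x_n-x_{n+1}\|\le \tfrac{\eta L}{\sqrt{n-1}} + \tfrac{\sqrt n-\sqrt{n-1}}{\sqrt{n-1}}\|\X\|$, and the second piece sums to $\Theta(\|\X\|\log N)$ rather than vanishing. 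So the claimed ``tighter accounting'' is not established.

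\textbf{Round $1$.} After you peel off $b_1\cdot(x_1-x^*)\le LD$, the FTRL lemma you invoke for $n=2,\dots,N$ needs the effective regulariser $\phi_n(x)=R_n(x)+b_1\cdot x$, since the algorithm's $x_n$ minimises $S_{n-1}\cdot x+R_n(x)$ which still contains $b_1$. The penalty term then becomes $\phi_{N+1}(x^*)-\phi_2(x_2)=R_{N+1}(x^*)-R_2(x_2)+b_1\cdot(x^*-x_2)$, and the cross term $b_1\cdot(x^*-x_2)$ costs a further $LD$, giving $2LD$ overall. (Not peeling is cleaner: $R_1\equiv 0$ does not break the FTRL induction, and the $n=1$ stability term $b_1\cdot(x_1-x_2)\le L\|y_1-y_2\|=\eta L^2$ is absorbed by $LD$ only once $\eta \le D/L$.)

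Both problems disappear with the standard ``same-regulariser look-ahead'' decomposition: set $\tilde x_{n+1} \in \am\{S_n\cdot x + R_n(x):x\in\X\}$ (so $\tilde x_{n+1}=P_\X(y_1-\eta S_n/\sqrt{n-1})$ for $n\ge 2$, and $\tilde x_2\in\am\{b_1\cdot x\}$). Then $\|x_n-\tilde x_{n+1}\|\le \eta\|b_n\|/\sqrt{n-1}$ by $1$-Lipschitzness of $P_\X$, so $\sum_{n\ge 2}b_n\cdot(x_n-\tilde x_{n+1})\le 2\eta L^2\sqrt{N-1}$, while $b_1\cdot(x_1-\tilde x_2)\le LD$ supplies the additive constant; and the be-the-leader induction (using $R_n\le R_{n+1}$, and $R_1\equiv 0$ at the base case) gives $\sum_{n=1}^N b_n\cdot(\tilde x_{n+1}-x^*)\le R_N(x^*)\le \tfrac{1}{2\eta}\|\X\|^2\sqrt N$. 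Putting these together recovers the theorem exactly.
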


\begin{algorithm}[!t]\caption{Lazy Anytime Online Gradient Descent}
	\DontPrintSemicolon 
	\KwData{Action set $\X \subset \RR^d$. Base point $y_1 \in \RR^d$. Parameter $\eta > 0$. Cost vectors $b_1,b_2,\ldots \in \RR^d$.\;}
	\text{select action} $\ds x_1 = \Pi_\X(y_1)$\;
	\text{pay cost} $b_1 \cdot x_1$\; 
	\SetKwBlock{Loop}{Loop}{EndLoop}
	\For{$n=2,3,\ldots$}{
	
	\text{recieve} $b_{n-1}$\;
	$\ds y_n = y_1 -\eta \left( \frac{b_1 + \ldots + b_{n-1}}{\sqrt {n-1}} \right )$\;
	\text{select action} $\ds x_n = \Pi_\X(y_n)$\;
	\text{pay cost} $b_n \cdot x_n$	
}
\end{algorithm}
The {\it lazy} terminology comes from \cite{Z} and refers to how  the action $x_n = \Pi_\X(y_n)$ is computed using  a single projection. For comparison so-called greedy variants define the actions iteratively, for example the  action  $x_{n+1} = \Pi_X(x_{n} - b_{n}/\sqrt{n}) $ requires $n$ projections to compute. The lazy aspect of the algorithm is important since lazy and greedy variants are known to behave differently. See \cite{GDStronglyCurved} Section 5. 


\section{Lazy Online Gradient Descent on Polytopes}

\noindent In this section we prove our main result that  Online Gradient Descent on a polytope has  pseudo-regret $O(D^2L^2/\DD)$ in the i.i.d setting.
Henceforth the domain $\P$ is a polytope. That means the convex hull of a  finite set $\V \subset \RR^d$ such that no element is in the convex hull of the other elements. These elements are called the {\it vertices} and are uniquely defined (see \cite{ConvexNotes} Theorem 4.7).  Equivalently  every  polytope is the solution to a finite set of affine inequalities that correspond to the facets of the polytope (see  Gallier Section 4).
Here a {\it face} is the intersection of $\P$ with a  tangent plane, and a {\it facet} is a face whose affine hull has dimension one less than the polytope itself.
Write $D = \max\{\|x-y\|: x,y \in \P\}$ for the diameter and $\|\P\| = \max\{\|x-y_1\|: x  \in \P\}$ for the radius relative to the basepoint $y_1$ in Algorithm 1.

Write $\V^* =   \am\{a \cdot x : x \in \V\}$ for the  optimal vertices.
Since  every linear function on a polytope is minimised on a vertex $\V^*$ is nonempty.   The vertex set is laminated by the expected cost; write the distinct suboptimality gaps as $\{a \cdot(v - v^*) : v^*  \in \V^*, v  \in  \V -\V^*\} = \{\DD_2,\ldots, \DD_U\}$ for some $U \le |\V|$ and $\DD=\DD_2  < \ldots < \DD_U$. The layers are  $\V_j = \{v \in \V: a \cdot(v - v^*) = \DD_j \, \forall v^* \in \V^*\}$. We abuse notation and also write $\DD_v = a \cdot(v - v^*)$ for each $v^* \in \V^*$.


\begin{theorem} \normalfont\label{T2}   Let $\P \subset \RR^d$ be a polytope. Suppose the cost vectors $a_1,a_2,\ldots$ are i.i.d with all $\|a_i\| \le L$ and $\|a_i- a\| \le R$ for $\Ex[a_j] = a$. Suppose we run Algorithm 1 with domain $\P$ and parameter $\eta>0$ and starting point $y_1 \in \RR^d$.  Then for each $\AA > 3$  and $\BB = \frac{1}{3}- \frac{1}{\AA}$ the pseudo-regret satisfies
	\begin{align}
	\Ex \left [\sum_{i=1}^{\infty} a \cdot( x_i-v^*)   \right ] \le LD +\frac{1}{\DD}\left ( \frac{\|\P\|^2}{2 \eta}  + 2 \eta L^2 +  \sqrt{ \frac{\pi}{2} }RD \right ) \left( \frac{3}{2} \frac{ \AA D^2 }{ \eta  }      + \frac{\eta  L^2 }{\AA   }   \right )\ \, \label{firstbound}\\ +   \frac{12 R^2D^2}{\BB^2 \DD}  \exp \left (-\frac{1}{2}\left(\frac{\AA \BB \|\P\|}{\eta  R } \right)^2 \right).\notag
	\end{align}

	for $\|\P\| = \sup \{\|x-y_1\|: x \in \P\}$ and $D = \sup\{\|x-y\|: x,y \in \P\}$.
	In particular for  $y_1 \in \P$ and $\eta = D/2L$  and each $v^* \in \arg \! \min \{a \cdot x: x \in \P\}$ we have 
	 
	\begin{align}
	\Ex \left [\sum_{i=1}^{\infty}   a \cdot( x_i-v^*)   \right ]  \le   L D + \left( 31 L\left (  2L  + \! \sqrt{ \frac{\pi}{2} }R \! \right )      \!+ \!    15 R^2\right) \frac{D^2}{\DD}    = O \left( \frac{L^2 D^2}{\DD}\right).\label{secondbound}
	\end{align}
	
	
\end{theorem}


Theorem  \ref{T2} is proved using   several lemmas. To state the lemmas  we recall some terminology.  For convex $\X \subset \RR^d$ and $y \in \RR^d$ we write $\Pi_\X(y)= \am\{\|x-y\| : x \in \X\}$ for the Euclidean projection onto $\X$. 
The normal cone to $\X$ at $x \in \X$ is the set $N_\X(x)   = \{ u \in \RR^d: u \cdot y \le u \cdot x \text{ for all } y \in \X\}$.  
For $x$ a vertex the normal cone has dimension $d-1$;  for $x$ in the interior of a facet the normal cone has dimension $1$; and in general the dimension of $N_\P(x)$  depends on the dimension of the largest face with $x$ in its interior.
For any $x \in \X$ and $u \in N_\X(x)$ we write $T_\X^u(x) = \{y \in \RR^d: u \cdot y = u \cdot x \}$ for the tangent plane to $\X$ at $x$ in the $u$-direction.



 Lemma \ref{L1}  follows from the definition of the normal cone.


\begin{lemma}\label{L1} \normalfont Let $\X \subset \RR^d$ be convex  with $x \in \X$ and  $-u \in N_\X(x)$. The tangent plane $T^{-u}_\X(x) $   satisfies $T^{-u}_\X(x)  \cap \X = \{y \in \X: -u \in N_\X(y)\} = \am\{u \cdot y : y \in \X\}.$
\end{lemma} 

From Lemma \ref{L1} we see   $-a \in N_\P(v)$ if and only if $v$ is optimal. For $v$ suboptimal 
Lemma \ref{conebound} gives a lower bound for  the angle between $-a$ and $N_\P(v)$.  To interpret the lemma recall the quotient $\frac{a \cdot v}{\|a\| \|v\|}$ is the cosine of the angle between vectors $a$ and $v$. Lemma \ref{conebound} is proved in Appendix C.


\begin{lemma} \normalfont\label{conebound} For each $v \in \V -\V^*$ we have  $\ds  \inf \left \{ \frac{a \cdot u}{ \|a\|\|u\|} : u \in N_\P(v)\right \} \ge \theta_v $ where we define  \begin{align}\theta_v = \frac{1/2}{1+D^2\|a\|^2/\DD_v^2}-1   \label{angles}                                                                                                                                      \end{align}
 Hence the quantities $\phi_v = \theta_v+1$ are positive.  
\end{lemma}

 Lemma \ref{face}  is proved in Appendix C. 

\begin{lemma} \normalfont\label{face} Each  face $F$ of $\P$ is the convex hull of $F \cap \V$.
\end{lemma}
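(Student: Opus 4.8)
The plan is to reduce the statement to Lemma \ref{L1} together with the elementary fact recorded in the Terminology section: for any $c \in \RR^d$ the minimiser set $\am\{c \cdot x : x \in \P\}$ is the convex hull of $\am\{c \cdot x : x \in \V\}$. With that in hand the argument is short.

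First I would unwind the definition of a face. By definition a face $F$ of $\P$ has the form $F = Q \cap \P$, where $Q = \{z \in \RR^d : b \cdot z = b \cdot x\}$ is a tangent (supporting) plane at some $x \in \P$, so that $-b \in N_\P(x)$. Lemma \ref{L1}, applied with $\X = \P$, then identifies $F = Q \cap \P$ with $\am\{b \cdot z : z \in \P\}$. Applying the recorded fact with $c = b$ gives $F = \am\{b \cdot z : z \in \P\} = \mathrm{conv}\big(\am\{b \cdot z : z \in \V\}\big)$, so it only remains to check that the vertex set appearing there equals $F \cap \V$. For one inclusion, every $v \in \am\{b \cdot z : z \in \V\}$ attains the value $b \cdot x = \min_{z \in \P} b \cdot z$, hence lies on $Q$ and in $\P$, so $v \in F$; being a vertex it lies in $F \cap \V$. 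For the reverse inclusion, any vertex $v \in F$ satisfies $b \cdot v = b \cdot x = \min_{z \in \P} b \cdot z$, which is in particular the minimum of $b \cdot z$ over $\V$, so $v \in \am\{b \cdot z : z \in \V\}$. Combining the two displays yields $F = \mathrm{conv}(F \cap \V)$.

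The only points requiring a little care are bookkeeping: matching the orientation of the supporting hyperplane to the sign convention in the definition of $N_\P$ used in Lemma \ref{L1}, and the degenerate case $b = 0$ (equivalently $F = \P$), where the conclusion still holds since $\mathrm{conv}(\V) = \P$ by the definition of a polytope; whether one also wishes to count $\emptyset$ as a face is vacuous. Since the claim is precisely Proposition 2.3(i) of \cite{Ziegler}, a legitimate alternative is simply to cite it, but the derivation above is self-contained from material already in the paper and I expect no real obstacle beyond this bookkeeping.
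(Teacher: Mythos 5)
Your argument is correct, but it sits inside a circularity that the paper avoids by design, and which you should be explicit about. The paper's ``proof'' of Lemma~\ref{face} is simply a citation to Proposition 2.3(i) of \cite{Ziegler}. You instead reduce Lemma~\ref{face} to Lemma~\ref{L1} together with the Terminology-section fact that $\am\{a\cdot x: x\in\P\}=\mathrm{conv}\,(\am\{a\cdot x: x\in\V\})$. But that fact \emph{is} Lemma~\ref{L2} (the two identities coincide because $\am\{a\cdot x : x\in\V\}=\V\cap\am\{a\cdot x: x\in\P\}$, since the minimum over $\P$ is attained on a vertex), and the paper's Appendix~C proves Lemma~\ref{L2} \emph{from} Lemma~\ref{face}. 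So within the paper's own dependency chain, your derivation runs $\mbox{Lemma \ref{face}} \Leftarrow \mbox{Lemma \ref{L2}} \Leftarrow \mbox{Lemma \ref{face}}$, which establishes an equivalence of the two statements rather than a proof of either.

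The gap is easily closed, and I would encourage you to close it rather than merely lean on the Terminology section. The minimiser fact has a short self-contained proof that does not go through Lemma~\ref{face}: let $m=\min\{a\cdot x : x\in\P\}$ and $M=\{v\in\V : a\cdot v=m\}$. The inclusion $\mathrm{conv}(M)\subseteq\am\{a\cdot x:x\in\P\}$ is immediate by linearity. Conversely, if $a\cdot x=m$ and $x=\sum_i\lambda_i v_i$ is a convex combination of vertices, then $m=\sum_i\lambda_i(a\cdot v_i)$ with each $a\cdot v_i\ge m$, forcing $a\cdot v_i=m$ whenever $\lambda_i>0$; hence $x\in\mathrm{conv}(M)$. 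With this inserted, your route becomes a genuine, self-contained alternative to the Ziegler citation (and in fact renders the paper's own proof of Lemma~\ref{L2} redundant). The rest of your argument --- the sign bookkeeping around $N_\P$, and the degenerate case $b=0$ --- is handled correctly and matches the conventions set in Lemma~\ref{L1}.
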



  
  \subsection*{Proof Outline}

\noindent The picture to keep in mind throughout the proof is a polytope with a single optimal vertex. The ray from the optimiser in the $-a$ direction is contained in the normal cone at the optimiser. 
Since the cone is linear, points distance $t$ along the ray are distance $\Omega(t)$ from the interior boundary. Rescaling,  we see that a sequence of points   $\Omega(\sqrt n)$ along the ray can be perturbed by   $O(1)$ and still have all but finitely many points remain in the cone. 
To apply this intuition to Algorithm 1  consider the unprojected actions

\begin{align}y_{n+1}    =   y_1 -  \frac{\eta}{\sqrt{n}}\sum_{i=1}^{n}  a_i  =  y_1 -  \eta \sqrt n a    + \frac{\eta}{\sqrt{n}}\sum_{i=1}^{n} ( a-a_i).  \label{threeterms0} 
\end{align}
To obtain the unprojected action, we start at $y_1$ and move distance $\Omega( \sqrt n) $ along the ray in the $-a$ direction, and then apply the i.i.d perturbation $\Ep_{n+1} = \frac{1}{\sqrt{n}}\sum_{i=1}^{n} ( a-a_i)$.  For  $v^*$ the optimiser and $x_{n+1}$ the action we can rearrange to get
\begin{align} y_{n+1}  -x_{n+1}   =    \underbrace{\phantom{\big (}v^* -  \eta \sqrt n a \phantom{\big (}}_{\text{position on ray}} \ + \ \underbrace{\phantom{\big (} y_1 - x_{n+1} -v^*\phantom{\big (} }_{O(1) \text{ perturbation} }  + \underbrace{ \eta \Ep_{n+1}}_{O(1) \text{ w.h.p}}.\label{threeterms}
\end{align} 
The right-hand-side is a perturbation of the ray from the optimiser. For $y_1 \in \P$ the first part of the perturbation is bounded by the size of the domain. The second part of the perturbation is $O(1)$ with high probabilty  as  a mean zero i.i.d sum. For large $n$ it  follows $y_{n+1} -x_{n+1}$ is normal at the optimiser. Since  $x_{n+1} = \Pi_\P(y_{n+1})$ we   also know $y_{n+1} -x_{n+1}$ is normal at $x_{n+1}$.  Hence Lemma \ref{L1} says the action $x_{n+1}$ is the optimiser and the pseudo-regret is zero on that turn.

 \begin{figure}[h]
  \begin{center}
   \includegraphics[width=0.98\textwidth]{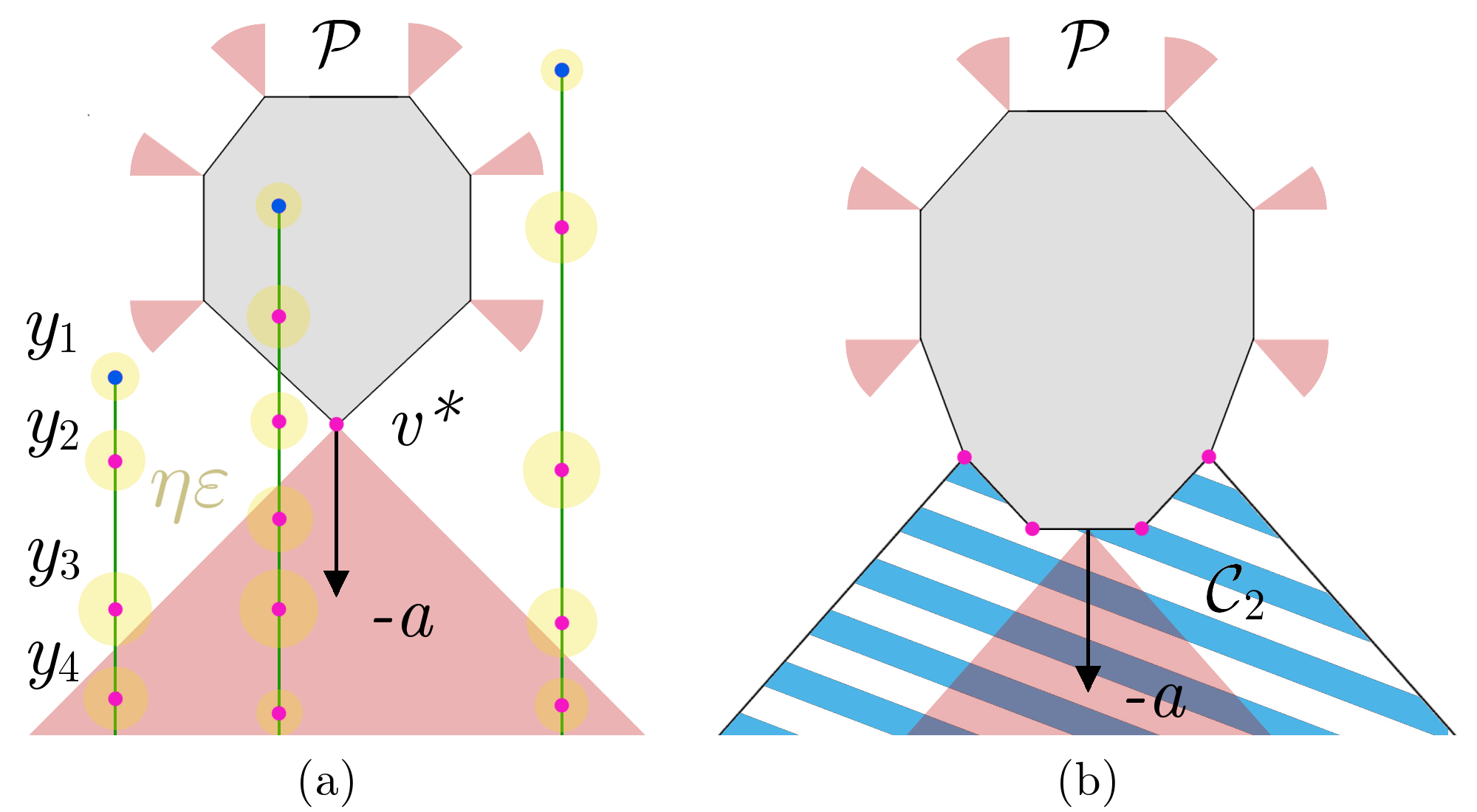}
  \end{center}
  \caption*{\openup 0.2em Figure (a): Schematic of \eqref{threeterms0} for several choices of starting point $y_1$ (blue). The i.i.d perturbation is shown in yellow. The unprojected actions (pink) are eventually inside the normal cone (red) at the optimiser. This cone coincides with the region that projects to the optimiser. 
  
  Figure (b): Schematic of the case for several optimal vertices. Here  $\ol \V_2$ has four elements. The blue area that projects onto $\ol \V_2$ does not coincide with the red cone $\C_2$ and there is no natural place to put the apex. This is not a problem since moving the apex is equivalent to adding another $O(1)$ perturbation to the sequence.} 
 \end{figure} 
The full proof follows the same principle in greater generality. Rather than considering only optimality of $x_n$ we consider the lamination $\V_j = \{v \!\in\! \V: a \cdot(v \!- \!v^*) = \DD_j \, \forall v^* \!\in \!\V^*\}$   and write $\ol \V_j = \V_1 \cup \ldots \cup \V_j$. 
For the nested sequence of cones  $\C_j = \bigcup \{N_\P(v): v \in \ol \V_j\}$ it can be shown that if $y_n -x_n \in \C_j$ then the pseudo-regret for that turn is at most $\DD_{j-1}.$ Similar to the previous paragraph, the proof strategy is to  derive conditions on $n$ and i.i.d sum in \eqref{threeterms} to make this happen.

\begin{lemma}\normalfont \label{assumptions}Let  $v \in \V - \V^*$ and $\AA> 1$ be arbitrary. Define $ \BB = \frac{1}{3} - \frac{1}{\AA}$ and suppose 
\begin{align} n > \left ( \frac{ \AA \|\P\| }{\eta \|a\| }\right )^2 \max \left \{1, \frac{1  }{  2\phi_v  }\right\} \qquad \qquad \|\Ep_{n+1}\|  < \BB \sqrt n \|a\| \min \left \{1  ,  \sqrt {2\phi_v} \right \}.\label{smallnoise}\end{align}
Then for $w  = y_{n+1}-x_{n+1}$ we have $v \notin T^{w}_\P(x_{n+1}) $

\end{lemma}

\begin{proof}For ease of notation write $x,y$ and $\Ep$ instead of $x_{n+1},y_{n+1}$ and $\Ep_{n+1}$. Recall $x$ is the projection of  $y =y_1-\eta\sqrt n a + \eta\Ep$ onto $\P$. To see $y \notin \P$  we claim $\|y_1 - y\|   \ge \|\P\|$. To that end write  $\|y_1 - y\| =  \eta \|\sqrt n a - \Ep\| \ge \eta  (\sqrt n \|a\| - \|\Ep\|)  \ge \eta ( \sqrt n \|a\| - \BB  \sqrt n \|a\|) = (1-\BB) \eta \sqrt n \|a\|$ where we have used the assumption on $\|\Ep_{n+1}\|$. The assumption on $n$  gives $(1-\BB) \eta \sqrt n \|a\| \ge (1-\BB) \eta \frac{\AA \|\P\|}{\eta \|a\|} \|a\| = (1-\BB) \AA  \|\P\| =   \|\P\|$ by definition of $\BB$.


	Hence $y \notin \P$ and  $y -x \in N_\P(x)$  and  $T^w_\P(x) = \{z \in \RR^d: (y -x) \cdot z = (y -x) \cdot x \}$ is the tangent plane.	For a contradiction suppose  $v \in T^w_\P(x) $.  In the notation of Lemma  \ref{L1} let $\X = \P$ and $u=-w$. Since $v \in  T^w_\P(x)\cap \P$ the lemma says $w \in N_\P(v)$. Then Lemma \ref{conebound} says $$\ds   \frac{a \cdot w }{\|a\|\|w\|} \ge  \min \left \{ \frac{a \cdot u}{ \|a\|\|u\|} : u \in N_\P(v)\right \}  = \theta_v.$$
	To reach a contradiction it is enough  to show $  \frac{a \cdot w}{\|a\|\|w\|} < \theta_v.$ To that end write
	\begin{align}  
	\frac{a \cdot w}{\|a\|\|w\|} =  \frac{1}{2}\left \| \frac{a}{\|a\|} + \frac{w }{\|w \|}\right \|^2-1 \label{a1}
	\end{align} 
	To see the right-hand-side is less than $\theta_v$ we claim $\left \| \frac{a}{\|a\|} \!+ \!\frac{w }{\|w \|}\right \| \!<\! \sqrt {2 \phi_v}$. To that end write $$ w= \left( y_{1} - \frac{\eta}{ \sqrt n} \Sum a_i \right)  - x  = y_{1} -  \eta\sqrt n a   + \eta \Ep - x  =  \left( y_{1}    + \eta \Ep  - x \right)  -  \eta\sqrt n a  = \dd -\eta \sqrt n a$$
	for $\dd = y_{1}    + \eta \Ep  - x$. Hence for  $\dd ' = \dd / \eta \sqrt n$ we have $\frac{w}{\|w\|}  = \frac{ \dd -\eta \sqrt n a}{\| \dd -\eta \sqrt n a\|} =   \frac{ \dd' -  a}{\| \dd' -  a\|}$.	To simplify     the right-hand-side of \eqref{a1}  write
	\begin{align*} \frac{a}{\|a\|} + \frac{w}{\|w\|} = \frac{a}{\|a\|} + \frac{ \dd' -  a}{\| \dd' -  a\|} = \frac{a}{\|a\|} \left( 1 - \frac{\|a\|}{\|\dd' - a\|}\right)  + \frac{ \dd'  }{\| \dd' -  a\|} \\
	= \frac{a}{\|a\|} \left( \frac{\|\dd' - a\|}{\|\dd' - a\|} - \frac{\|a\|}{\|\dd' - a\|}\right)  + \frac{ \dd'  }{\| \dd' -  a\|} 
	\end{align*}
	and use the reverse triangle inequality to see
	\begin{align} \left \| \frac{a}{\|a\|} + \frac{w}{\|w\|} \right \| \le    \left | \frac{\|\dd' - a\|}{\|\dd' - a\|} - \frac{\|a\|}{\|\dd' - a\|}\right|   + \frac{ \|\dd'\|  }{\| \dd' -  a\|} \le  2 \frac{  \|\dd'\|  }{\| \dd' -  a\|} =  2 \frac{  \|\dd\|  }{\| \dd  -  \eta \sqrt n a\|}\label{c0.5}
	\end{align} 
	To bound the numerator of \eqref{c0.5} use the $\phi_v$ terms in \eqref{smallnoise} to bound $\|\P\|$ and $\|\Ep\|$ and see
	
	\begin{align} 
	 \|\dd\| \le  \|y_1-x_n\|+\eta \|\Ep \|  \le \|\P\| +  \eta \|\Ep \| 
	   < \frac{\eta \sqrt n \|a\|}{\alpha} \sqrt{2 \phi_v} + \eta \beta \sqrt n \|a\|\sqrt{2 \phi_v}\notag \\  =  \left(\frac{1}{\alpha}+ \beta \right)\eta  \sqrt n \|a\|\sqrt{2 \phi_v} = \frac{1}{3}\eta  \sqrt n \|a\|\sqrt{2 \phi_v}\label{c0.75} .
	 \end{align}
 To bound the denominator of \eqref{c0.5} use  the $1$ terms in \eqref{smallnoise} to see $\|\dd\| < \frac{1}{3}\eta  \sqrt n \|a\|$ and so
	  
	 \begin{align} 
	 \| \dd  -  \eta \sqrt n a\| > \eta \sqrt n \|a\| - \|\dd\|  > \eta \sqrt n \|a\| - \frac{ \eta \sqrt n \|a\|}{3} = \frac{ 2\eta \sqrt n \|a\|}{3}. 
	 \label{c2}
	 \end{align}
	 Combine (\ref{c0.5}$-$\ref{c2}) to conclude $\left \| \frac{a}{\|a\|} \!+ \!\frac{w }{\|w \|}\right \| \!<\! \sqrt {2 \phi_v}$.
	\end{proof}
Next we relate the previous lemma to the regret.
  
  
\begin{lemma}\label{ass}\normalfont Suppose the conditions   \eqref{smallnoise} hold for some $n \in \NN$ and $v \in \ol \V_j$. Then for each $v \in  \V^*$ we have  $a \cdot(x_{n+1} - v^*) \le \DD_{j-1}$.
\end{lemma}

\begin{proof}   Lemma \ref{assumptions} says $v \notin T^{w}_\P(x_{n+1}) $ for  $w  = y_{n+1}-x_{n+1}$. Moreover since  $\DD_2 < \ldots < \DD_d$ and $\phi_2 < \ldots < \phi_d$ the conditions \eqref{smallnoise} also hold for each $v \in \ol \V_k$ with $k \in \{j, \ldots, U\}$. Hence  $ T^{w}_\P(x_{n+1}) \cap \V \subset \ol \V_j$.  Now consider the face $F = T^{w}_\P(x_{n+1})  \cap \P$ of the domain. Lemma \ref{face} says $F$ is the convex hull of some subset of $\V$.  Since $ T^{w}_\P(x_{n+1}) \cap \V \subset \ol \V_j$ we see $F$ is contained in the convex hull $C$ of $ \ol \V_j$.  By linearity we have $a \cdot(x - v^*) \le \DD_{j-1}$ for each $x \in C$. Since $x_{n+1} \in F \subset C$ we get $a \cdot(x_{n+1} - v^*) \le \DD_{j-1}$ as required. 
\end{proof}
We will use the following vector-concentration inequality of \cite{GoodAH} to show the small noise condition  \eqref{smallnoise} holds with high probability as $n \to \infty$.  See Appendix C for discussion of the theorem.

\begin{theorem} \normalfont\label{Pinelis}Suppose the i.i.d sequence $X_1,X_2,\ldots$ takes values in $\RR^d$. Suppose each $\Ex[X_i] = 0$ and $\|X_i\| \le R$. Then for each $r \ge 0$ we have
	$$P \left( \Big \|\sum_{i=1}^n X_i \Big\| \ge  n r \right) \le 2\exp \left( -\frac{r^2}{2 R^2} n\right).$$
\end{theorem} 
Similar to \eqref{smallnoise} let $\AA > 3$   and $  \BB =   \frac{1}{3} - \frac{1}{\AA}$. For each $j \in \{2,\ldots, U\}$ we define \begin{align}n_j =  \left \lceil \left ( \frac{\AA \|\P\| }{\eta \|a\| }\right )^2  \left ( 1+ \frac{D^2 \|a\|^2}{\DD_j^2}\right) \right \rceil +1 \qquad  \qquad r_j =  \BB   \|a\| \left(  1+ \frac{D^2 \|a\|^2}{\DD_j^2}\right)^{\hspace{-1mm}-1/2} \label{r}\end{align} 

The expression for $n_j$ mirrors  the bound for $n$ in  \eqref{smallnoise}. The $\min$ is replaced with a sum and $\phi_v$ is written explicitly following \eqref{angles}. The expression for $r_j$ mirrors the bound for $\Ep_{n+1}$ without the factor of $\sqrt n$.
We will use the above to  derive separate bounds for the   initial segment   $ \sum_{i=1}^{n_2} a \cdot(x_i -v^*)$ and the final segment $ \sum_{i> n_2}^{\infty} a \cdot(x_i -v^*)$ of the pseudo-regret.  First we bound the probabilities for  the final segment.

\begin{lemma}\label{concentration} \normalfont
  Let  $j \in \{2,\ldots, U\}$ and $\AA,\BB$ and $n_j, r_j$ be as defined in {\normalfont (\ref{r})}. For  $n>n_j$  we have
 \begin{align*}
  P\big ( a \cdot (x_{n+1} - v^*) > \DD_{j-1}\big ) \le 2 \exp \left (-\frac{ r_j^2}{2R^2} n\right). 
 \end{align*}

 \noindent In particular for $j=2$ we have \vspace{-2mm}
 \begin{align*}
  P\big ( a \cdot (x_{n+1} - v^*) >0 \big ) \le  2 \exp \left (-\frac{r_2^2}{2R^2} n\right). 
 \end{align*}
\end{lemma}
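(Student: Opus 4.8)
The plan is to read off the tail bound by feeding the vector concentration inequality Theorem~\ref{Pinelis} into the deterministic statement Lemma~\ref{ass}. Fix $j \in \{2,\ldots,V\}$ and a turn index $n > N$, with $N$ and $r_j$ as in~(\ref{r}). The first thing I would check is that the first hypothesis of Lemma~\ref{ass}, namely $n > \big(\tfrac{\AA\|\P\|}{\eta\|a\|}\big)^2\big(1+\tfrac{D^2\|a\|^2}{\DD_j^2}\big)$, holds automatically: this is exactly the quantity inside the ceiling defining $N$, so $n > N$ forces it (and if one prefers a single $N$ valid for every $j$, built from the smallest gap $\DD_2$, the monotonicity $\DD_2 \le \DD_j$ still makes this threshold cleared). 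With the first hypothesis free, the only way the conclusion $a\cdot(x_{n+1}-v_1)\le\DD_{j-1}$ of Lemma~\ref{ass} can fail is for its second hypothesis to fail, so
\begin{align*}
\big\{\, a\cdot(x_{n+1}-v_1) > \DD_{j-1}\,\big\}
&\subseteq \Big\{\, \|\Ep_{n+1}\| \ge \BB\sqrt n\,\|a\|\,\big(1+\tfrac{D^2\|a\|^2}{\DD_j^2}\big)^{-1/2}\,\Big\}\\
&= \big\{\, \|\Ep_{n+1}\| \ge r_j\sqrt n\,\big\},
\end{align*}
the last equality being just the definition of $r_j$ in~(\ref{r}).

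The next step is to bound the probability of the event on the right. Recall $\Ep_{n+1} = \tfrac{1}{\sqrt n}\sum_{i=1}^{n}(a-a_i)$ is the normalised sum of $n$ independent, mean-zero vectors each of norm at most $R$, so Theorem~\ref{Pinelis} applies directly; taking deviation level $t = r_j$ it gives
\begin{align*}
P\big(\|\Ep_{n+1}\| \ge r_j\sqrt n\big) \;\le\; 2\exp\!\Big(-\tfrac{r_j^2}{2R^2}\,n\Big).
\end{align*}
Chaining this with the event inclusion above yields $P\big(a\cdot(x_{n+1}-v_1) > \DD_{j-1}\big) \le 2\exp(-\tfrac{r_j^2}{2R^2}n)$, which is the assertion. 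For the stated special case I would simply put $j=2$: then $\DD_{j-1}=\DD_1=0$, so the inequality becomes $P\big(a\cdot(x_{n+1}-v_1) > 0\big) \le 2\exp(-\tfrac{r_2^2}{2R^2}n)$.

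There is no genuine analytic difficulty here — once Lemma~\ref{ass} and Theorem~\ref{Pinelis} are in hand the proof is a one-line inclusion of events followed by a tail estimate and a substitution. The only point that needs care is the bookkeeping: lining up the two hypotheses of Lemma~\ref{ass} exactly against (i) the defining inequality for $N$ in~(\ref{r}) and (ii) the complement of the event $\{\|\Ep_{n+1}\|\ge r_j\sqrt n\}$, and, should a common $N$ be used across all $j$, observing that the $j$-dependent threshold in Lemma~\ref{ass} is still met because $\DD_2 \le \DD_j$.
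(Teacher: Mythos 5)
Your proof is correct and is exactly the argument the paper intends: the paper gives no explicit proof of Lemma~\ref{concentration}, remarking only that ``Theorem~\ref{Pinelis} combined with Lemma~\ref{ass} gives the following bound,'' and your event-inclusion argument (first hypothesis deterministic once $n>N$, contrapositive of Lemma~\ref{ass} reduces the bad event to the failure of the second hypothesis, then Theorem~\ref{Pinelis}) is the precise chain of implications being invoked. One small prose slip: you say ``taking deviation level $t = r_j$,'' but to land on $2\exp(-r_j^2 n/(2R^2))$ you must substitute $r = r_j\sqrt{n}$ into Theorem~\ref{Pinelis} (since $\|\Ep_{n+1}\| \ge r_j\sqrt n$ unfolds to $\|\sum_{i=1}^n(a_i-a)\|\ge \sqrt n\cdot(r_j\sqrt n)$); the displayed inequality itself is correct, so this is only a matter of wording.
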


\begin{proof}Lemma \ref{conebound} and the definitions \eqref{r} give
\begin{align} n_j > \left ( \frac{ \AA \|\P\| }{\eta \|a\| }\right )^2 \max \left \{1, \frac{1  }{  2\phi_v  }\right\} \qquad \qquad \sqrt n r_j \le  \BB \sqrt n \|a\| \min \left \{1  ,  \sqrt {2\phi_v} \right \}.\label{smallnoise1}\end{align}
Note the right-hand-sides of \eqref{smallnoise1} are the same as \eqref{smallnoise}. 
Hence Lemma \ref{ass} says it is enough to show $P \left(  \|\Ep_{n+1}  \| \ge   {\sqrt n} r_j \right) \le 2\exp \left( -\frac{r^2_j}{2 R^2} n\right).$  To that end use Theorem  \ref{Pinelis}  with $X_i = a_i-a$ to see $P \left( \big \|\sum_{i=1}^{n} (a_i-a) \big\| \ge n r_j \right) \le 2\exp \left( -\frac{r^2_j}{2 R^2} n\right).$  To finish recall  $ \Ep_{n+1} = \frac{1}{\sqrt{n}}\sum_{i=1}^{n} ( a-a_i)$ is the error term so the left-hand-side equals $P \left( \|\Ep_{n+1}\| \ge \sqrt{n} r_j \right)$. 
\end{proof}
Next we use the above to bound the expectation of the final segment.
\begin{lemma}\normalfont \label{finalP}Let  $j \in \{2,\ldots, U\}$ and $\AA,\BB$ and $r_j,n_j$  be as defined in {\normalfont (\ref{r})}. We have
\begin{align*} \sum_{n=n_2}^\infty \Ex\big[a \cdot (x_{n+1}-v^*) \big]  \le  \frac{12 R^2D^2}{\BB^2 \DD }  \exp \left (-\frac{1}{2}\left(\frac{\AA \BB \|\P\| }{\eta  R } \right)^2 \right).\end{align*} 
\end{lemma}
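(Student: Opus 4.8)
\emph{Plan.} I would run the usual argument for turning a high-probability ``snap to the optimal face'' event into a convergent regret series: write each $\Ex\bigl[a\cdot(x_{n+1}-v_1)\bigr]$ as a tail integral, substitute the exponential tail bound of Lemma~\ref{concentration}, then sum the resulting geometric series over $n$ and telescope over the vertices.

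Put $Y_n:=a\cdot(x_{n+1}-v_1)$. Since $x_{n+1}\in\P$ and (the vertices being relabelled so that $a\cdot v_1\le\cdots\le a\cdot v_V$) the points $v_1,v_V$ minimise and maximise $a\cdot x$ over $\P$, we have $0\le Y_n\le\DD_d$. Writing $\Ex[Y_n]=\int_0^{\DD_d}P(Y_n>t)\,dt$, chopping $[0,\DD_d]$ along the gaps $0=\DD_1<\DD_2<\cdots<\DD_V=\DD_d$, and using that $t\mapsto P(Y_n>t)$ is non-increasing, I get
\[
\Ex[Y_n]\ \le\ \sum_{j=2}^{V}(\DD_j-\DD_{j-1})\,P\bigl(Y_n>\DD_{j-1}\bigr).
\]
For every $n\ge N$ the choice of $N$ in (\ref{r}) makes the hypotheses of Lemma~\ref{ass} hold for \emph{all} $j\in\{2,\dots,V\}$ (because $1+D^2\|a\|^2/\DD_j^2\le 1+D^2\|a\|^2/\DD^2$), so Lemma~\ref{concentration} gives $P(Y_n>\DD_{j-1})\le 2e^{-c_j n}$ with $c_j:=r_j^2/(2R^2)$. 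Summing over $n\ge N$, exchanging the two sums (Tonelli), and comparing with an integral,
\[
\sum_{n=N}^{\infty}\Ex[Y_n]\ \le\ 2\sum_{j=2}^{V}(\DD_j-\DD_{j-1})\sum_{n=N}^{\infty}e^{-c_j n}\ \le\ 2\sum_{j=2}^{V}(\DD_j-\DD_{j-1})\,\frac{e^{-c_j(N-1)}}{c_j}.
\]
The factor $\exp\bigl(-\tfrac12(\AA\BB\|\P\|/(\eta R))^2\bigr)$ in the claim then falls out: since $r_j^2\bigl(1+D^2\|a\|^2/\DD_j^2\bigr)=\BB^2\|a\|^2$ and $N-1\ge(\AA\|\P\|/(\eta\|a\|))^2(1+D^2\|a\|^2/\DD^2)$, a one-line computation gives $c_j(N-1)\ge\tfrac12(\AA\BB\|\P\|/(\eta R))^2=:K$ for every $j$ (the leftover ratio $(1+D^2\|a\|^2/\DD^2)/(1+D^2\|a\|^2/\DD_j^2)$ is $\ge1$), hence each $e^{-c_j(N-1)}\le e^{-K}$.

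It remains to bound the prefactors $1/c_j$ and telescope. Here $1/c_j=2R^2/(\BB^2\|a\|^2)+2R^2D^2/(\BB^2\DD_j^2)$, and the key move is to bill the $j$-independent first term to $\DD_d$ via $\DD_d=a\cdot(v_V-v_1)\le\|a\|D$, i.e.\ $\|a\|^{-2}\le D^2/\DD_d^{\,2}$; this yields $1/c_j\le\tfrac{2R^2D^2}{\BB^2}\bigl(1/\DD_d^{\,2}+1/\DD_j^{\,2}\bigr)$. Plugging everything back,
\[
\sum_{n=N}^{\infty}\Ex[Y_n]\ \le\ \frac{4R^2D^2}{\BB^2}\,e^{-K}\left(\frac{1}{\DD_d^{\,2}}\sum_{j=2}^{V}(\DD_j-\DD_{j-1})+\sum_{j=2}^{V}\frac{\DD_j-\DD_{j-1}}{\DD_j^{\,2}}\right).
\]
The first inner sum telescopes to $\DD_V-\DD_1=\DD_d$, contributing $1/\DD_d$; in the second, the $j=2$ term equals $1/\DD$ and for $j\ge3$ one uses $(\DD_j-\DD_{j-1})/\DD_j^{\,2}\le(\DD_j-\DD_{j-1})/(\DD_j\DD_{j-1})=1/\DD_{j-1}-1/\DD_j$, which telescopes to at most $1/\DD$. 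Hence the bracket is at most $1/\DD_d+2/\DD$, which is exactly the claim. (I keep the section's convention $\DD_1<\cdots<\DD_V$; equal gaps only add zero terms, so the general case follows.)

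\emph{Main obstacle.} The layer-cake step, inserting the tail bound, the geometric sum, and the final telescoping are all routine. The delicate part is the bookkeeping in the last paragraph: one has to notice that the $j$-independent piece $2R^2/(\BB^2\|a\|^2)$ of $1/c_j$ must be charged against $1/\DD_d^{\,2}$ using $\DD_d\le\|a\|D$ (rather than against $1/\DD_j^{\,2}$ via the cruder $\DD_j\le\|a\|D$), since the latter makes the telescoped sum a strictly larger multiple of $1/\DD$ that no longer fits under $1/\DD_d+2/\DD$. Getting the uniform bound $c_j(N-1)\ge K$ exactly right is the other computation that matters, as it is precisely what upgrades the crude ``$n$ large'' of Lemma~\ref{ass} into the specific Gaussian-type factor appearing in the statement.
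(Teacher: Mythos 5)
Your proof is correct and follows essentially the same route as the paper's: layer-cake decomposition of $\Ex[a\cdot(x_{n+1}-v_1)]$ over the gap levels $\DD_1<\cdots<\DD_V$, substitution of the exponential tail bound from Lemma~\ref{concentration}, a geometric-series-to-integral comparison over $n\ge N$, extraction of the uniform factor $\exp(-\tfrac12(\AA\BB\|\P\|/\eta R)^2)$, and finally the telescoping estimate $\sum_{j\ge3}(\DD_j-\DD_{j-1})/\DD_j^2\le 1/\DD$. The only cosmetic differences are that you derive that telescoping inequality inline (the paper cites it as Lemma~17 of \citep{Paper1}), you charge the $j$-independent $1/\|a\|^2$ piece of $1/c_j$ against $D^2/\DD_d^2$ term-by-term whereas the paper does so only after summing (algebraically identical), and you verify $c_j(N-1)\ge K$ directly for each $j$ where the paper first notes $r_2\le r_j$ and then expands $r_2$ — all equivalent manipulations of the same argument.
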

\begin{proof}Lemma \ref{concentration} says the complementary CDF $F(t) = P(a \cdot (x_{n+1}-v^*) >t)$ is dominated by the piecewise function 
	$$f(x) =  \begin{cases} 
\ds  2 \exp \left (-\frac{r_2^2}{2R^2} n\right)  & 0 <  x \le \DD_{2}\\
\ds 2  \exp \left (-\frac{ r_{k}^2}{2R^2} n \right)  & \DD_{k-1} < x \le \DD_k  \text{ with } k \ge 3\\[5pt]
\ds 0 &   \DD_U < x
\end{cases}
$$ 
The second part of  Lemma  \ref{CDF} says $\Ex\big[a \cdot (x_{n+1}-v^*) \big] = \int_0^\infty F(t) dt$ and so
			\begin{align*} 
			\Ex\big[a \cdot (x_{n+1}-v^*) \big] &\le \int_0^\infty f(t)dt= \int_0^{\DD_U} f(t)dt\\ &= 2\DD_{2} \exp \left (-\frac{ r_2^2}{2R^2} n\right)     + 2\sum_{k=3}^{U} (\DD_{k}-\DD_{k-1})\exp \left (-\frac{ r_k^2}{2R^2} n\right)  .
			\end{align*}		
			Now take the sum from  $n_2$ to $\infty$. The terms are  decreasing so we can bound the sums by the corresponding integrals and get
\begin{align*}\sum_{n=n_2}^\infty \!\Ex \big[a \cdot (x_{n+1}\!-\!v^*) \big] & \le 2\DD_{2} \!\sum_{n=n_2}^\infty \! \exp \left (-\frac{ r_2^2}{2R^2} n\right)     + 2\sum_{n=n_2}^\infty\sum_{k=3}^{U} (\DD_{k}-\DD_{k-1})\exp \left (-\frac{ r_k^2}{2R^2} n\right) \\ & \le 2\DD_{2} \! \int_{n_2-1}^\infty \! \exp \left (-\frac{ r_2^2}{2R^2} x\right) dx     + 2 \!\int_{n_2-1}^\infty \! \sum_{k=3}^{U} (\DD_{k}-\DD_{k-1})\exp \left (-\frac{ r_k^2}{2R^2} x\right)dx\\ &=   4R^2\frac{\DD_{2}}{r^2_2}  \exp \left (-\frac{ r_2^2}{2R^2} (n_2-1)\!\right)      + 4R^2 \! \sum_{k=3}^{U} \! \frac{\DD_{k}-\DD_{k-1}}{r^k_2}  \exp \! \left (-\frac{ r_k^2}{2R^2} (n_2-1) \! \right)   \end{align*} 
Since $\DD_2 \le \ldots \le \DD_U$ we have all $r_2 \le r_k $ and the above gives

\begin{align} \sum_{n=n_2}^\infty \Ex\big[a \cdot (x_{n+1}-v^*) \big] &\le  4R^2\left( \frac{ \DD_{2}}{r^2_2}+  \sum_{k=3}^U \frac{  \DD_{k}-\DD_{k-1} }{r^2_k}\right) \exp \left (-\frac{ r_2^2}{2R^2} (n_2-1)\right)\notag \\
			& \le 4R^2\left( \frac{ \DD_{2}}{r^2_2}+  \sum_{k=3}^U \frac{  \DD_{k}-\DD_{k-1} }{r^2_k}\right) \exp \left (-\frac{ r_2^2}{2R^2} \left ( \frac{\AA \|\P\| }{\eta \|a\| }\right )^2  \left ( 1+ \frac{D^2 \|a\|^2}{\DD ^2}\right)\right)\notag \\
			& = 4R^2\left( \frac{ \DD_{2}}{r^2_2}+  \sum_{k=3}^U \frac{  \DD_{k}-\DD_{k-1} }{r^2_k}\right) \exp \left (-\frac{1}{2}\left(\frac{\AA \BB \|\P\| }{\eta  R } \right)^2 \right)\label{ex1}
			\end{align} 
			
where the last line follows from expanding the definition (\ref{r}) of $r_2$ and cancelling terms. To bound the second factor  in \eqref{ex1} expand the definition of each  $r_k$ to get

 \begin{table}[!h] \addtolength{\tabcolsep}{-4pt}
 	\centering
 	\begin{tabular}{rrrr} 
 	$\ds \frac{  \DD_{k}-\DD_{k-1} }{r^2_k}  =$&$\ds (\DD_{k}-\DD_{k-1}) \frac{   1+ D^2\|a\|^2/\DD_k^2 }{ \beta^2 \|a\|^2}  = $ & $\ds \frac{   \DD_{k}-\DD_{k-1} }{ \beta^2 \|a\|^2}   $ &$\ds +\ (\DD_{k}-\DD_{k-1}) \frac{    D^2  }{ \beta^2 \DD_k^2}.$\\[20pt]
 	$\ds  \frac{  \DD_{2}  }{r^2_2} =$	    &$\ds  \DD_{2}  \frac{   1+ D^2\|a\|^2/\DD_2^2 }{ \beta^2 \|a\|^2}=   $&$\ds        \frac{   \DD_{2}  }{ \beta^2 \|a\|^2}   $&$\ds   +  \     \frac{    D^2  }{ \beta^2 \DD_2}$\\[20pt]
 	\end{tabular} 
 \end{table}    \vspace{-5mm}			  
\noindent The sum over the first terms telescopes to give $\frac{\DD_U}{\BB^2 \|a\|^2}$. The sum over the second terms gives $\frac{D^2}{\beta^2} \left ( \frac{1}{\DD_2} +\frac{\DD_3-\DD_2}{\DD_3^2} + \ldots + \frac{\DD_U-\DD_{U-1}}{\DD_U^2} \right) $.
Lemma \ref{telescope} in Appendix D says the second factor is at most  $\frac{2}{\DD_2} = \frac{2}{\DD } $. Hence   \eqref{ex1} gives
\begin{align*}  \sum_{n=n_2}^\infty \Ex\big[a \cdot (x_{n+1}-v^*) \big]  \le   4R^2 \left( \frac{\DD_U}{\BB^2 \|a\|^2} +    \frac{2D^2 }{ \BB^2\DD}  \right)  \exp \left (-\frac{1}{2}\left(\frac{\AA \BB \|\P\| }{\eta  R } \right)^2 \right).\end{align*}
To remove the $\DD_U$ term   recall $\DD_U   = a \cdot(v - v^*)$ for some $v \in \V$. Hence $ \DD_U \le \|a\|D$ and   $\frac{1}{\|a\|^2} \le \frac{D^2}{\DD_U^2} $. Hence we have  $\frac{\DD_U}{\BB^2 \|a\|^2} \le \frac{D^2}{\BB^2 \DD_U}  \le \frac{D^2}{\BB^2 \DD }$. Gather common factors to complete the proof. \end{proof}
Next we bound the expectation of the initial segment.

\begin{lemma}\label{initialP} \normalfont Let  $j \in \{2,\ldots, U\}$ and $\AA,\BB$ and $r_j,n_j$  be as defined in  {\normalfont (\ref{r})}. We have 

\begin{align*}
\Ex \left [\sum_{i=1}^{n_2} a  \cdot (x_i - v^*) \right] \le     LD + \left ( \frac{\|\P\|^2}{2 \eta}  + 2 \eta L^2 +  \sqrt{ \frac{\pi}{2} }RD \right ) \left(  \frac{3}{2}\frac{ \AA D^2 }{ \eta   }      + \frac{\eta  L^2 }{\AA   }   \right )\frac{1}{\DD}. \end{align*}
\end{lemma}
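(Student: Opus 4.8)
The plan is to control the first $N$ rounds directly with the worst-case regret bound of Theorem \ref{worstcase} and then pass from regret to pseudo-regret by a martingale estimate, mirroring the treatment of the unit ball in Lemmas \ref{swap} and \ref{initial}.

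First I would apply Theorem \ref{worstcase} to the cost vectors $a_1,\ldots,a_N$ on the domain $\P$. Since that bound is stated for the best fixed action in hindsight it a fortiori bounds the regret against the particular fixed action $v_1\in\P$, so
\begin{align*}
\sum_{i=1}^N a_i\cdot(x_i-v_1)\;\le\; LD+\left(\frac{\|\P\|^2}{2\eta}+2\eta L^2\right)\sqrt N .
\end{align*}
Adding $\sum_{i=1}^N(a-a_i)\cdot(x_i-v_1)$ to both sides turns the left-hand side into the pseudo-regret through round $N$; taking expectations and bounding the new term with Lemma \ref{MartingaleBound} in Appendix B — the diameter-$D$ analogue of the $\sqrt{2\pi}R\sqrt M$ estimate used for the ball, namely at most $\sqrt{\pi/2}\,RD\sqrt N$ — gives
\begin{align*}
\Ex\left[\sum_{i=1}^N a\cdot(x_i-v_1)\right]\;\le\; LD+\left(\frac{\|\P\|^2}{2\eta}+2\eta L^2+\sqrt{\frac{\pi}{2}}\,RD\right)\sqrt N .
\end{align*}

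It then remains to show $\sqrt N\le\tfrac32\,\tfrac{\AA D^2}{\eta\DD}+\tfrac{\eta\DD}{\AA D^2}$. Here I would use two elementary facts: $\|\P\|\le D$, valid since $y_1\in\P$ so the distance from the basepoint is at most the diameter, and $\|a\|\ge\DD_d/D\ge\DD/D$, the latter because $\DD_d=\DD_V=a\cdot(v_V-v_1)\le\|a\|D$. Taking $\DD_j=\DD_2=\DD$ in the definition (\ref{r}) of $N$ and discarding the ceiling at the cost of a $+1$,
\begin{align*}
N\;\le\;\left(\frac{\AA\|\P\|}{\eta\|a\|}\right)^2+\left(\frac{\AA\|\P\|D}{\eta\DD}\right)^2+2\;\le\;2\left(\frac{\AA D^2}{\eta\DD}\right)^2+2 .
\end{align*}
Writing $u=\AA D^2/(\eta\DD)$ the right-hand side is $2u^2+2\le\tfrac94u^2+3\le(\tfrac32u+\tfrac1u)^2$, so $\sqrt N\le\tfrac32u+\tfrac1u$, as required. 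Substituting this into the last display proves the lemma.

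The only genuinely probabilistic step is the bound on $\Ex[\sum_{i=1}^N(a-a_i)\cdot(x_i-v_1)]$ supplied by Appendix B, so that is the hard part in spirit; the rest is the routine regret-to-pseudo-regret conversion (noting Theorem \ref{worstcase} applies verbatim with $v_1$ in place of the hindsight-optimal action) together with the algebra that forces $N$ into the shape $(\tfrac32u+\tfrac1u)^2$, which rests on $\|\P\|\le D$, on $\|a\|D\ge\DD$, and on absorbing the ceiling and the additive constant into the AM--GM term $\eta\DD/(\AA D^2)$.
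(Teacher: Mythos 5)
Your proof is correct and follows essentially the same route as the paper: apply Theorem \ref{worstcase} to bound the regret through round $N$, add back $\sum_{i=1}^N(a-a_i)\cdot(x_i-v_1)$ and control its expectation via Lemma \ref{MartingaleBound}, then bound $\sqrt N$. The only genuine divergence is in the final algebra: the paper bounds $\sqrt N$ by two successive concavity estimates ($\sqrt{x+2}\le\sqrt x+1/\sqrt x$ and $\sqrt{1+y^2}\le y+1/(2y)$) followed by $\DD\le\|a\|D$, whereas you bound $N$ itself by $2u^2+2$ with $u=\AA D^2/(\eta\DD)$ and observe this is at most the perfect square $(\tfrac32 u+\tfrac1u)^2$ — a slightly more direct calculation. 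One small point worth flagging: both your argument and the paper's quietly replace $\|\P\|$ by $D$ when passing from the definition of $N$ in (\ref{r}) to the final bound, which needs $\|\P\|\le D$ (e.g.\ $y_1\in\P$); you state this explicitly, which is an improvement, but note the lemma as stated does not hypothesize it, so strictly the displayed bound should either carry $\|\P\|$ or the hypothesis $y_1\in\P$ should be added.
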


\begin{proof}Theorem 1 says \begin{align*}
\sum_{i=1}^{n_2} a_i \cdot (x_i - v^*) \le     LD + \left ( \frac{\|\P\|^2}{2 \eta}  + 2 \eta L^2 \right) \sqrt {n_2} .\end{align*}
By Lemma \ref{MartingaleBound} in Appendix B  we have 
\begin{align*}\Ex \left [\sum_{i=1}^{{n_2}}  (a-a_i) \cdot( x_i-v^*)  \right ] & \le   \sqrt{ \frac{\pi}{2} }RD \sqrt {n_2}.  
\end{align*}Take expectation and add the two lines together to get
\begin{align} \label{thing}
\Ex \left [\sum_{i=1}^{n_2} a  \cdot (x_i - v^*) \right] \le     LD + \left ( \frac{\|\P\|^2}{2 \eta}  + 2 \eta L^2 +  \sqrt{ \frac{\pi}{2} }RD \right ) \sqrt {n_2}.  \end{align}
It remains to bound $\sqrt {n_2}$. The definition (\ref{r}) says
\begin{align*}\sqrt {n_2} \le \sqrt {\left ( \frac{\AA D }{\eta \|a\| }\right )^2  \left ( 1+ \frac{D^2 \|a\|^2}{\DD ^2}\right)+2}. 
\end{align*}
By concavity we have $\sqrt{x+a} \le \sqrt x + \frac{a}{2 \sqrt x}$ and the above is at most 
\begin{align*}  \frac{\AA D }{\eta \|a\| }  \sqrt {   1\!+ \!\frac{D^2 \|a\|^2}{\DD ^2} } +  \frac{\eta \|a\| }{\AA D } \left(   1\!+\! \frac{D^2 \|a\|^2}{\DD ^2} \right)^{\hspace{-1mm}-1/2 }
	 \! \le    \frac{\AA D }{\eta \|a\| }  \sqrt {   1+ \frac{D^2 \|a\|^2}{\DD ^2} } +  \frac{\eta \|a\| }{\AA D } \left(     \frac{D^2 \|a\|^2}{\DD ^2} \right)^{\hspace{-1mm}-1/2}\\
	 =  \frac{\AA D }{\eta \|a\| }  \sqrt {    \frac{D^2 \|a\|^2}{\DD ^2} +1} +  \frac{\eta  \DD }{\AA D^2 } \le     \frac{\AA D }{\eta \|a\| }  \left( \sqrt {    \frac{D^2 \|a\|^2}{\DD ^2}} + \frac{1}{2}\sqrt {    \frac{\DD ^2}{D^2 \|a\|^2}}\right)  +  \frac{\eta  \DD }{\AA D^2 }\\ =  \frac{\AA D^2 }{\eta \DD }  + \frac{\AA \DD}{2 \eta \|a\|^2 }   + \frac{\eta  \DD }{\AA D^2 }.  
	\end{align*}
To put the three terms in the same form recall  $\DD     =  a  \cdot\!(v_2 \!- \! v^*) \! \le \!\|a\| D$ and so $\frac{1}{\|a\|} \!\le \!\frac{D}{\DD} $ and $\frac{1}{D} \! \le \!\frac{\|a\|}{\DD} \! \le \! \frac{L}{\DD}$. Hence the second term is at most $ \frac{\AA \DD}{2\eta } \frac{D^2}{\DD^2} \! = \! \frac{  \AA D^2}{2 \eta \DD}$ and the third term is at most $\frac{\eta \DD }{\AA  }\frac{L^2}{\DD^2}  \! = \! \frac{\eta L^2}{\AA \Delta}$.  Hence $n_2 \! \le \! \frac{3}{2}\frac{ \AA D^2 }{ \eta \DD }    \!  + \! \frac{\eta  L^2 }{\AA \DD } $. Plug this into   (\ref{thing}) and simplify to complete the proof.
\end{proof}
The main theorem now follows from combining the bounds in Lemma \ref{initialP} and \ref{finalP} for the initial and final segments.  
\vspace{5mm}

\begin{proof}{\bf of Theorem \ref{T2}}
 The first and second lines of \eqref{firstbound} come from Lemmas  \ref{initialP} and \ref{finalP} respectively. To prove \eqref{secondbound} plug $\eta = D/2L$ into \eqref{firstbound}  and use  $\|\P\|\le D$ to get

 \begin{align} \Ex \!\left [\sum_{i=1}^{\infty} \! a \!\cdot\!( x_i \!-\!v^*)   \right ]   \le       L D + \left (  2L   + \! \sqrt{ \frac{\pi}{2} }R  \! \right ) \left( \!  3 \AA + \frac{1}{2\AA} \!\right) \frac{LD^2}{\DD}   \!+ \!  \frac{12R^2D^2}{\BB^2\DD}  \! \exp \left (\! -2\left(\frac{\AA \BB L}{   R^2 }  \right)^2\right). \label{proof1}\end{align}
 



 Since we can take $R=2L$ we can replace the exponential with $\exp \left (-\frac{(\AA \BB)^2}{2}\right)$. For $\AA = 10$ we have  $\BB = \frac{1}{3}-\frac{1}{10} = \frac{7}{30} $ and $\AA \BB =  \frac{7}{3}$. The coefficients are bounded by $\left(3 \AA + \frac{1}{2\AA} \right) = 30 + \frac{1}{20} \le 31$ and $\frac{12}{\BB^2} = 12\left( \frac{30}{7}\right)^2  = \frac{10800}{49} \le 221 $ and $\exp \left (-\frac{(\AA \BB)^2}{2}\right) = \exp \left (-\frac{49}{18}\right) \le \frac{15}{221}$.
 Plug these bounds into  \eqref{proof1} and simplify to prove \eqref{secondbound}.
 
\end{proof}
Note the bound \eqref{firstbound} holds simultaneously for all hyperparameters $\AA>3$ but is difficult to optimise algebraically. In \eqref{secondbound}  we choose $\AA=10$ to give the coefficients the same order.  


  
 \section{Examples  with Euclidean Bounds}

 \begin{table}[!b]\def\arraystretch{1.4}
 	\centering
 	\begin{tabular}{l|c|c|c| r r r }
 		 
\multicolumn{1}{c}{ }& 
\multicolumn{3}{c}{Dimensions}&
\multicolumn{1}{r}{Algorithm}&
\multicolumn{1}{c}{Antagonistic regret}&
\multicolumn{1}{c}{i.i.d pseudo-regret }  \\[5pt] \hline 
 
 Poltope               & $D^2$ & $V$  & $L_\infty$    & Gradient Descent & $ L D \sqrt N  $ & $ L^2D^2/\DD $  \\ 
 		&  &  &    &  Hedge & $ L_\infty \sqrt{\log (V) N} $  & $ L_\infty ^2 \log (V) /\DD  $   \\  \hline
 		$d$-Simplex           & $ 2$ & $d$ & $L$ & $\ \ $ Gradient Descent & $ L \sqrt {N}  $ & $ L^2 /\DD   $  \\
 		&&& &Hedge &$ L \sqrt { \log (d)N}  $ & $ L^2 \log d /\DD  $ \\  [5pt]
 		$d$-Cube              & $4 d$ & $2^d$ &  $L \sqrt d$ & Gradient Descent  &$ L \sqrt {dN}  $ & $  L^2 d/ \DD  $   \\
 		&&&&Hedge & $ L d\sqrt {N}  $ & $ L^2 d^2/\DD  $ \\[5pt] 
 		$\B(n)$  & $2n$  & $n!$ &  $L \sqrt n$ & Gradient Descent & $ L  \sqrt {nN}   $& $  L^2 n/\DD  $ \\ 
 		&&&&Hedge &$ L n\sqrt { \log (n)N}   $ & $ L^2 n^2 \log n/\DD $ \\  [5pt]
 		$\P(d)$ &  $\ds \frac{d^3}{3}$ & $d!$ &  $\ds \frac{ L\, d^{\, 3/2}}{\sqrt 3}$ & Gradient Descent &$ L d^{3/2} \sqrt {N}  $ & $  L^2 d^3 /\DD  $     \\ 
 		&&&&Hedge &$ L \,d^{\,5/2}\sqrt {  \log(d)N}  $ & $  L^2 d^5 \log d/\DD  $ \\  [5pt]
 		$\P_\pm (d)$  &  $ \ds \frac{4d^3}{3}$ & $2^d d!$ & $\ds \frac{ L\, d^{\, 3/2}}{\sqrt 3}$  & Gradient Descent &$ L \, d^{3/2}\sqrt {N}   $ & $  L^2 d^3/\DD  $  \\
 		&&&&Hedge &$ L \,d^{\,5/2}\sqrt{  \log(d)N}   $ & $ L^2 d^5 \log d/\DD  $ \\  [5pt] \hline 
 	\end{tabular} 
 	
 	\captionsetup{format=hang} 
 	\caption{ Comparison of order bounds for Gradient Descent and Lifted Hedge under Euclidean bounds  $\|a_n\| \le L$ on cost vectors. Here $L_\infty$ bounds the $\infty$-norm of the lifted vectors.}
 	
 \end{table}    
 
 \noindent In this section we compare our results from  Theorem \ref{T2} for Gradient Descent to those of \cite{OptimalHedge} from lifting the problem and running Hedge.  
 
 We pay special attention to the Birkhoff Polytope and (signed) permutahedron, as these are are particularly well-studied in the context of optimisation. See for example  \cite{TransportPolytopes3,TransportPolytopes2,TransportPolytopes6,TransportPolytopes7,TransportPolytopes5,TransportPolytopes1,TransportPolytopes8} and the references therein.  The Birkhoff polytope $\B(n)$ is the convex hull of the $n \times n$ permutation matrices and the permutahedron $\P(d)$ is the convex hull of the vectors with components $1,2,\ldots, d$. For full definitions see Section 4 Example 1.
 These polytopes occur in problems where each turn we must select a permutation. For example to rank user preferences, choose a route through a graph, or assign  ``vehicles'' to ``routes'' in a transportation problem. For further examples  see  \cite{warmuth2008randomized}  and \cite{kalai2016efficient}. 
 For overviews see    \cite{CombinatorialOptimisation} or \cite{TransportationTextBook}.
  
 Before comparing performance, we describe the lifting procedure in detail. Given a polytope domain   $\P \subset \RR^d$ with vertices  $\{v_1,v_2,\ldots, v_V\}$ and cost vectors $a_1,a_2,\ldots$ we  define an auxiliary problem on the $V$-simplex. Let $\phi: \RR^V \to \RR^d$ be the unique linear map with each $\phi e_j=v_j$. Define the auxiliary cost vectors $A_1,A_2,\ldots \in \RR^V$ with components $A_i(j) = a_i \cdot v_j$. Running Hedge on the auxiliary problem gives actions  $X_1,X_2,\ldots $ in the $V$-simplex.  The results of \cite{OptimalHedge} say these actions give  $O\big(L_\infty \sqrt {\log (V) N }\big)$ regret in the antagonistic case and  $O(L_\infty^2 \log(V)/\DD )$ pseudo-regret in the i.i.d case. By linearity the actions $x_n = \phi X_n$ in the original problem satisfy the same regret bounds.  Since $\P$ is the convex hull of its vertices  $x_n$ are valid actions in the original problem.
 
 To bound $L_\infty = \sup \{\|A_i\|_\infty : i \le N\}$ in terms of the given quantity $L$ write $\|A_i\|_\infty = \max_i |A_i(j)| = \max _j |a_i \cdot v_j| \le \max_j \|a_i\|\|v_j\|  $ and use Cauchy-Schwarz to get  $L_\infty \le  L\|\P\|$.  This is used to express the Hedge bounds in Table 1 in terms of $L$ rather than $L_\infty$. 
 
 The Gradient Descent bounds in Table 1 come from Theorems \ref{worstcase} and \ref{T2}. Note the theorems are dimension-independent and contain only $L,D,N$ and not $d$. However the later  polytopes have $D$ grow with $d$ and hence the final bounds grow with dimension. 

 For the first three polytopes the $D^2$ and $L_\infty$ values in the table are exact. For $\P(d)$ and $\P_\pm(d)$ the $D^2$ values are limits as $d \to \infty$. For the precise values see   Examples 4.5 and 5.5 in Appendix A. The $L_\infty$ values are also limits of the exact value  $L_\infty =  L \sqrt{\frac {d(d+1)(2d+1)}{6}} $ obtained (\cite{Sums}) from  the formula $\sum_{n=1}^d n^2 =\frac {d(d+1)(2d+1)}{6} $.

   
   \subsection{Discussion}
   
Gradient Descent has a better dependence on dimension than Hedge for all the polytopes in Table 1. This is because $d$ affects the Hedge bounds twice. First through the explicit $\log(V)$ factor and second through $L_\infty$ which is dimension dependent in the later examples. For Gradient Descent the dimension only contributes once as the diameter grows with dimension.

\section{ Intrinsic Bounds on the Cost Vectors}
 
The bounds in  Theorems \ref{worstcase} and \ref{T2}   fall short in the special case when the cost vectors are nonzero but are perpendicular to the affine hull of the domain.  For example the simplex is contained in the subspace ${\{x \in \RR^d: x \cdot \1 = 1\}}$. Hence any minimisation problem on the simplex is trivial if all the cost vectors are multiples of $\1 = (1,1,\ldots, 1)$. However this triviality is not reflected in the theorems if the cost vectors are nonzero.

This can be remedied if we observe for Gradient Descent that all  behaviour of interest takes place inside the affine hull. In particular if we replace the cost vectors with their projections onto the affine hull, the actions and regret are unchanged. In the above example this gives a problem with all zero cost vectors. 

To that end we introduce the following intrinsic bounds.  Unlike Euclidean bounds the following do not depend on the choice of embedding $\X \subset \RR^d$. 
\begin{align} \label{nat} \sup \big \{|a_n \cdot(x-y)|:x,y \in \X \big \} \le L_\infty \qquad \sup \big \{|(a_n-a) \cdot (x-y)|:x,y \in \X \big \} \le R_\infty  
\end{align}  
 
The above generalises the  standard $\infty$-norm bounds on cost vectors for Hedge. For $\X$ the simplex the first bound is equivalent to each $|a_n(k)-a_n(j)| \le L_\infty$. Since the Hedge actions are unchanged by translating all components equally, we can replace each $a_n $ with $a_n - \frac{1}{2} \big(  \max_j   a_n(j)-\min_j a_n(j) \big) \1 $ to get $\|a_n\|_\infty \le L_\infty /2$ and likewise  $\|a_n -a\|_\infty \le R_\infty/2$.

In this section we will obtain regret bounds in terms of $L_\infty$ and $R_\infty$. The strategy is to first show the  Gradient Descent actions and regret are unchanged if we replace each $a_i$ with  $ \Pi_U(a_i) $ for $U$ the direction $\{t(x-y):x,y \in \X,t \in \RR \}$ of the domain. Next we convert (\ref{nat})  into Euclidean bounds for $ \|\Pi_U(a_i)\|$. Then we use Theorem \ref{T2} with the modified cost vectors to obtain an intrinsic bound for regret. 


To convert  (\ref{nat}) into Euclidean bounds we consider the width of the domain. The width of a set $X \subset \RR^d$ with interior is the smallest distance $W$ such that $X$ can be sandwiched between two parallel hyperplanes distance $W$ apart. If the polytope has no interior we must first restrict attention to the affine hull and then consider hyperplanes.

\begin{definition}\label{widthdef} \normalfont  Let $\X \subset \RR^d$ be convex    with direction $U = \{t(x-y):x,y \in \X,t \in \RR \}$. For each $\ell \in \RR^d$ let $W_\ell$ be the length of the interval $\{ \ell \cdot x: x \in \X\}$. The width of $\X$ is defined as $W = \inf \{ W_\ell: \ell \in U,  \|\ell\|=1 \}$. 
\end{definition}

To the authors' knowledge the notion of width does not appear in the existing optimisation literature to describe the shape of an action set. It appears elsewhere, for example in the study of mean widths of simplices (see \cite{meanwidth} and the references within); discrete geometry (\cite{latticewidth}); and variants of Tarski's plank problem about covering a given convex set with copies of some prescribed shapes (see \cite{plankproblem} and the  references within).  Below are examples of widths of familiar polytopes. See Appendix A for proofs.\\ 
  
 \begin{example} \normalfont $ $ 
 \begin{enumerate}
  \item[(1)]  The width of an $m$-dimensional cuboid $\prod _{j=1}^m [\AA_j,\BB_j]$ embedded in $\RR^d$ is  $\min_j|\BB_j-\AA_j|$.
  \item[(2)]  The $d$-simplex $ \{x \in \RR^d: \mbox{all } x(j) \ge 0  \mbox{ and } x(1) + \ldots + x(d) =1\}$ has width  $2 /\sqrt d$ for $d$ even. For $d$ odd the width is $2 /\sqrt d$  as $d \to \infty$. 
 \item[(3)]  The Birkhoff Polytope $\B(n)$ is the set of nonnegative $n\times n$ matrices with all row and column sums equal to $1$. Equivalently $\B(n)$  is the convex hull of the $n!$ permutation matrices. The width is bounded below by $ 2/\sqrt{n-1}$. 
 \item[(4)] The permutahedron $\P(d)$ is the set of vectors $x \in \RR^d$ with entries $\{x(1),\ldots, x(d)\} = \{1,2,\ldots, d\}$.  Equivalently $\P(d)$ is the convex hull of $\{(\si(1),\ldots, \si(d)): \si \in S_d \}$ for $S_d$ the permutation group. The width satisfies
 $$W \ge  \sqrt{\frac{5d^2 +8d +4 }{6}} \qquad \qquad   \liminf_{d \to \infty} \frac{W}{d} \ge \sqrt{5/6}.$$ 
 \item[(5)] The signed permutahedron $\P_\pm$ is the convex hull of the vectors $ (\pm\si(1),\ldots, \pm\si(d))$ for all choices of signs and permutation $ \si \in S_d $. The width satisfies
 $$W\ge  2\sqrt{  \frac{2d^2 +3d +1}{6}}  \qquad \qquad   \liminf_{d \to \infty} \frac{W}{d} \ge \sqrt{4/3}. $$ 
 \end{enumerate}
 \end{example}
 As promised we start by relating the width to Euclidean bounds.

 \begin{lemma}\label{natbound}\normalfont  Let $\X \subset \RR^d$ be convex  with direction $U$ and width $W$. For each $c \in U$  we have $\|c\| \le \frac{1}{W}\sup\{|c \cdot(x-y)|:x, y  \in  \X\}.$ 
 \end{lemma}
 
 \begin{proof}
 By definition $W_\ell = \sup \big  \{|\ell \cdot(x-y)|:x,y \in \X \big \}$ for each unit vector $\ell$. For $\ell = c/\|c\|$ we get 
 \begin{align*}W_\ell     =   \sup\left \{\left|\frac{1}{\|c\|}\,  c\cdot(x-y)\right|:x,y \in \X \right \} 
 =   \frac{1}{\|c\|}\, \sup\big  \{\!\left|   c\cdot(x-y)\right|:x,y \in \X \big \} \\
 \implies \|c\| =  \frac{\sup \big \{ \!\left|c\cdot(x-y)\right|:x,y \in \X \big \} }{W_\ell} \le   \frac{\sup \big \{\!\left|   c\cdot(x-y)\right|:x,y \in \X \big\} }{W}
 \end{align*}where the inequality comes from how $W  \le W_\ell$ by definition of the width.  
 \end{proof}
 We wish to use Lemma \ref{natbound} to bound the length of the cost vectors.  Since we cannot assume the costs lie in the direction of the domain,  we must show (Lemma \ref{ort2}) the actions are unchanged if we replace each cost vector with its projection. The first step (Lemma \ref{ort}) is to show the projection onto a convex set factors through the projection onto its direction.

 \begin{lemma}\label{ort}\normalfont 
  Suppose the convex set $\X \subset \RR^d$ has direction $U$. For each $p \in \RR^d$ we have 
  $\Pi_\X(p) = \Pi_\X(\Pi_U(p))$.
 \end{lemma}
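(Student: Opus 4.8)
\textbf{Proof plan for Lemma \ref{ort}.}

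The plan is to exploit that $\X$ lies entirely inside its affine hull $H = U+t$, so the Euclidean projection onto $\X$ factors through the orthogonal projection onto $H$, which in turn is an affine composition of $P_U$ with a fixed translation. First I would reduce from the affine subspace $H$ to the linear subspace $U$: writing $P_H(p) = P_U(p-t)+t$, the projection $P_U$ already captures all the content, and the translation by $t$ is harmless because $\X - t \subset U$. So it suffices to handle the case $t=0$, i.e.\ $\X \subset U$ with $U$ a linear subspace, and show $P_\X(p) = P_\X(P_U(p))$.

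The key step is the orthogonal decomposition $p = P_U(p) + q$ where $q = p - P_U(p) \perp U$. Then for any $x \in \X \subset U$ we have $x - P_U(p) \in U$, hence $x - P_U(p) \perp q$, and Pythagoras gives
\begin{align*}
\|x - p\|^2 = \|x - P_U(p)\|^2 + \|q\|^2.
\end{align*}
The term $\|q\|^2$ does not depend on $x$, so minimising $\|x-p\|$ over $x \in \X$ is the same optimisation problem as minimising $\|x - P_U(p)\|$ over $x \in \X$. Since $\X$ is compact and convex the minimiser is unique, and therefore $P_\X(p) = \am\{\|x-p\|:x\in\X\} = \am\{\|x-P_U(p)\|:x\in\X\} = P_\X(P_U(p))$.

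I do not expect any serious obstacle here; the only points requiring care are (i) confirming $x - P_U(p) \in U$, which uses both $x \in U$ and $P_U(p) \in U$, so that the Pythagorean split is legitimate, and (ii) noting that $P_U(p) \in U$ is the honest reason we may even speak of $P_\X(P_U(p))$ in the same terms. The reduction from the affine to the linear case in the first paragraph is the only mildly fiddly bookkeeping, and it is routine. The compactness and convexity of $\X$, already assumed in the statement, guarantee both projections are well-defined single points, so the argument closes cleanly.
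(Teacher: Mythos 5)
Your Pythagorean split is the right idea, and it is what the paper's coordinate argument does after rotating so that $U = \{x : x_1 = 0\}$: there the first coordinate contributes only constants to $\|p-x\|^2$ and to $\|P_U(p)-x\|^2$, which is your decomposition expressed in a basis. The problem is the reduction to $t=0$, which you call routine bookkeeping but which as stated proves the wrong identity. Translating by $-t$ and applying the linear case to $\X - t \subset U$ at the input $p-t$ gives $P_{\X-t}(p-t) = P_{\X-t}(P_U(p-t))$; translating back, this reads $P_\X(p) = P_\X(P_U(p-t)+t)$. But $P_U(p-t)+t = P_U(p) + (t - P_U(t))$, and $t_\perp := t - P_U(t)$ is the component of $t$ perpendicular to $U$, which is nonzero whenever $0 \notin U+t$. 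So your reduction yields $P_\X(p) = P_\X(P_U(p)+t_\perp)$, which is the lemma for the projection onto the affine hull $U+t$, not the claimed $P_\X(p) = P_\X(P_U(p))$ for the projection onto the linear subspace. The distinction matters downstream: Lemma \ref{ort2} needs the $P_U$ version precisely because $P_U$ is linear and hence commutes with taking the running sum of cost vectors, while the affine projection does not.

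The gap is easy to close once noticed. Either apply the linear case a second time at the input $P_U(p)-t$ to get $P_\X(P_U(p)) = P_\X(P_U(p)+t_\perp)$ and combine with the first identity; or, more directly, skip the reduction and run the Pythagorean split on $\X \subset U+t$ itself. Writing $x = x' + t_\perp$ with $x' \in U$ for each $x \in \X$, and $p = P_U(p) + q$ with $q \perp U$, you obtain
\begin{align*}
\|x-p\|^2 = \|x'-P_U(p)\|^2 + \|t_\perp - q\|^2, \qquad
\|x-P_U(p)\|^2 = \|x'-P_U(p)\|^2 + \|t_\perp\|^2,
\end{align*}
so both differ from $\|x'-P_U(p)\|^2$ by constants independent of $x$ and share the same minimiser over $\X$. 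This is the corrected form of the argument you intended, and it is what the paper does in coordinates.
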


 \begin{proof} Recall $U+x$ is the affine hull of $\X$ for each $x \in \X$.  More generally suppose $U \subset \RR^d$ is a vector subspace with $\X \subset U + t$ for some $t \in \RR^d$.
 We claim $\Pi_\X(p) = \Pi_\X(\Pi_U(p))$ for each $p \in \RR^d$.
 To that end recall  $\Pi_\X(p) = \ds \arg \! \min_{x \in \X} \|p-x\|^2 $ and write $p-x = (p-y) + (y-x)$.  By definition the second term on the right is contained in $U$.
 Since $y = \Pi_U(p)$ the first term  is orthogonal to $U$. 
 Hence we have $\|p-x\|^2 = \|p-y\|^2 + \|y-x\|^2$ and  
 $\Pi_\X(p) \ds  = \arg \! \min_{x \in \X} \left( \|p-y\|^2 \!+\! \|y-x\|^2\right).$ Since the first term does not   depend on $x$ we have $\ds \Pi_\X(p) =  \arg \! \min_{x \in \X}   \|y-x\|^2  =  \arg \! \min_{x \in \X}   \|\Pi_U(p)-x\|^2   = \Pi_\X(\Pi_U(p)). $  \end{proof}  
 
\begin{lemma}\label{ort2} \normalfont Suppose the domain $\X$ has direction $U$. Let $c_1,c_2,\ldots $ be the projections of the cost vectors $b_1,b_2,\ldots$ onto $U$. The actions chosen by Algorithm $1$ given $c_1,c_2,\ldots $ are the same as those given $b_1,b_2,\ldots $.
 \end{lemma}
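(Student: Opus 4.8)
The plan is to reduce everything to Lemma \ref{ort}, which tells us that the Euclidean projection onto $\X$ sees a point $p$ only through its orthogonal projection $P_U(p)$ onto the linear subspace $U$. Since Algorithm 1 produces its action on turn $n$ as $P_\X$ of an unprojected iterate $y_n$, it therefore suffices to show that the iterate built from the sequence $b_1, b_2, \ldots$ and the iterate built from $c_1, c_2, \ldots$ have the same image under $P_U$ at every turn. Write $y_n^{b} = y_1 - \eta(b_1 + \cdots + b_{n-1})/\sqrt{n-1}$ and $y_n^{c} = y_1 - \eta(c_1 + \cdots + c_{n-1})/\sqrt{n-1}$ for these two iterates; for $n=1$ both equal the common base point $y_1$, so the first action agrees trivially.

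For $n \ge 2$ the key observation is that the two iterates differ only in a direction that $P_U$ annihilates. Indeed, each $c_i = P_U(b_i)$, so $b_i - c_i \in U^\perp$, and since $U^\perp$ is a linear subspace the scaled partial sum
\begin{align*}
y_n^{b} - y_n^{c} = -\frac{\eta}{\sqrt{n-1}} \sum_{i=1}^{n-1} (b_i - c_i)
\end{align*}
again lies in $U^\perp$. Using linearity of $P_U$ and $P_U|_{U^\perp} = 0$ we get $P_U(y_n^{b}) = P_U(y_n^{c})$, and then applying Lemma \ref{ort} to each iterate yields
\begin{align*}
P_\X(y_n^{b}) = P_\X\big(P_U(y_n^{b})\big) = P_\X\big(P_U(y_n^{c})\big) = P_\X(y_n^{c}),
\end{align*}
so the action chosen on turn $n$ is the same for both cost sequences. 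No induction is even needed, since the iterate $y_n$ depends on the cost vectors only through their running sum.

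I do not expect a genuine obstacle here; the argument is essentially bookkeeping. The one point that requires care is that Lemma \ref{ort} is phrased in terms of the orthogonal projection onto the \emph{linear} subspace $U$ rather than the affine hull $U + t$, so I would be careful to apply it to the raw iterates $y_n^{b}, y_n^{c}$ themselves (whose difference lies in $U^\perp$), and to note that the base point $y_1$ need not lie in the affine hull of $\X$ but, being common to both runs, introduces no discrepancy.
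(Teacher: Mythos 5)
Your proof is correct and rests on the same two ingredients as the paper's own argument: Lemma \ref{ort} and the linearity of $P_U$. The only stylistic difference is that you observe directly that $y_n^b - y_n^c \in U^\perp$ and hence $P_U(y_n^b) = P_U(y_n^c)$, whereas the paper pushes $P_U$ through the sum via linearity and uses $c_i = P_U(c_i)$; these are the same computation phrased two ways.
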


\begin{proof} 
	Given cost vectors $b_1,b_2,\ldots $ Algorithm 1 selects actions $$x_{n+1} = \Pi_\X \left(y_1 - \eta   \frac{    b_1 + \ldots + b_n}{\sqrt n} \right).$$ Lemma \ref{ort} says the right-hand-side is unchanged if we  replace the argument with its projection onto $U$. Since projection onto a vector subspace is a linear function we have 
	$$x_{n+1} =\Pi_\X \left(\Pi_U(y_1) - \eta   \frac{    \Pi_U(b_1) + \ldots + \Pi_U(b_n)}{\sqrt n} \right) = \Pi_\X \left(\Pi_U(y_1) - \eta   \frac{    c_1 + \ldots + c_n}{\sqrt n} \right).$$ 
	Since $c_i \in U$ we have $c_i = \Pi_U(c_i)$. Hence the above equals 
	$$ \Pi_\X \left(\Pi_U(y_1) - \eta   \frac{    \Pi_U(c_1) + \ldots + \Pi_U(c_n)}{\sqrt n} \right)   = \Pi_\X \circ \Pi_U \left(y_1 - \eta   \frac{    c_1 + \ldots + c_n}{\sqrt n} \right)$$
	where we have again used linearity of $\Pi_U$.
	Use Lemma \ref{ort} to remove the $\Pi_U$ from the above and get $$ x_{n+1} = \Pi_\X   \left(y_1 - \eta   \frac{    c_1 + \ldots + c_n}{\sqrt n} \right).$$  
	These are just the actions given $c_1,c_2,\ldots \ $ as required.
\end{proof} Lemma  \ref{ort2} is enough to strengthen Theorems \ref{worstcase} and \ref{T2}  to replace the constants $L,R$ with those obtained from the projected cost vectors. Since projection is nonexpansive the new constants are smaller.

\begin{theorem} \normalfont \label{projEuclidean} Suppose the domain has direction $U$. Given cost vectors $b_1,b_2,\ldots $  Theorem \ref{worstcase} holds with $L$ replaced with $ \wt L = \sup \big \{\|\Pi_U(b_i)\|: i \le N \big \} $. Given i.i.d cost vectors $a_1,a_2,\ldots$ with $\Ex[a_n]=a$ the bounds in Theorem  \ref{T2}  hold with $L$ and $R$ replaced with $ \wt L = \sup \big \{\|\Pi_U(a_i)\|: i \le N\big \} $ and $\wt R = \sup \big \{\|\Pi_U(a_i - a)\|: i \le N \big \}$.  
\end{theorem}

Lemmas  \ref{natbound} and  \ref{ort2} together let us replace the Euclidean bounds $L$ in Theorem \ref{worstcase}   with the intrinsic bound $ L_\infty/W$. Simplify to get the following.

\begin{theorem} \normalfont\label{worstcase2}
Let the domain $\X \subset \RR^d$ have diameter $D$ and width $W$.  Suppose the cost vectors $b_1,b_2,\ldots$ have   \mbox{$|b_n \cdot (x-y)| \le L_\infty$}  for all   $x,y \in \X$. Then Algorithm 1 with  domain $\P$ and $y_1 \in \P$  and parameter $\eta = DW /2L_\infty$ gives  regret bound  $\ds 
	\sum_{i=1}^N b_i \cdot (x_{i} - y^*) \le   \frac{3L_\infty D}{W} \sqrt {N}$. 
\end{theorem}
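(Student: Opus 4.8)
The plan is to reduce to Theorem \ref{worstcase} by the same device used for Theorem \ref{T3}: project the cost vectors onto the direction space of the affine hull of $\P$ and verify that nothing relevant changes. Write the affine hull of $\P$ as $U+t$ for a vector subspace $U \subset \RR^d$, and set $c_n = P_U(b_n)$.

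First I would check that replacing $(b_n)$ by $(c_n)$ alters neither the action sequence nor the regret. The actions are unchanged by Lemma \ref{ort2}. For the regret, note that $b_n - c_n$ is orthogonal to $U$, so $(b_n - c_n)\cdot(x-y) = 0$ for all $x,y \in \P$ since $x-y \in U$; hence $\SumN b_i \cdot x$ and $\SumN c_i \cdot x$ differ by an additive constant over $x \in \P$, so they share the same minimiser $x^*$, and $b_i \cdot(x_i - x^*) = c_i\cdot(x_i - x^*)$ for every $i$. Thus it suffices to bound the regret of Algorithm 1 run with cost vectors $c_1,\ldots,c_N$.

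Next I would bound $\|c_n\|$. For $x,y \in \P$ we have $c_n\cdot(x-y) = b_n\cdot(x-y)$, so $\max\{|c_n\cdot(x-y)| : x,y \in \P\} \le L_\infty$; since $c_n \in U$, Lemma \ref{natbound} gives $\|c_n\| \le L_\infty/W$. Now apply Theorem \ref{worstcase} to $c_1,\ldots,c_N$ with $L = L_\infty/W$ and domain $\P$. Because $y_1 \in \P$ we have $\|\P\| = \max\{\|x - y_1\| : x \in \P\} \le D$, and with $\eta = DW/2L_\infty = D/2L$ the bracket satisfies $\tfrac{1}{2\eta}\|\P\|^2 + 2\eta L^2 \le \tfrac{L}{D}D^2 + \tfrac{D}{L}L^2 = 2LD$. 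Hence the regret is at most $LD + 2LD\sqrt N \le 3LD\sqrt N = \tfrac{3L_\infty D}{W}\sqrt N$, which is the claimed bound.

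There is no genuine obstacle here; the only points needing a little care are the observation that it is the regret \emph{value}, and not merely the action sequence, that is invariant under projecting the cost vectors onto $U$ (so that $x^*$ is also unchanged), and the bookkeeping that the choice $\eta = D/2L$ — rather than the $\|\P\|/2L$ appearing in the ``in particular'' clause of Theorem \ref{worstcase} — still yields the constant $3$ once one uses $\|\P\| \le D$.
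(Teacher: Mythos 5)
Your proposal is correct and follows exactly the reduction the paper intends (the paper's proof is a single sentence pointing at Lemmas \ref{natbound}, \ref{ort2} and Theorem \ref{worstcase}). The two details you flag — that the regret value, not just the action sequence, is invariant under projecting the cost vectors onto $U$ (so $x^*$ is unchanged), and the small calculation showing $\eta = D/2L$ with $\|\P\|\le D$ still gives the constant $3$ — are exactly the bookkeeping the paper elides, and you handle both correctly.
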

Likewise replace $L$ and $R$ with  in $L_\infty/W$ and $R_\infty/W$ in  Theorem \ref{T2} and simplify to get the following.

\begin{theorem} \normalfont\label{T3}   Let the domain $\P \subset \RR^d$ be a polytope with diameter $D$ and width $W$. Suppose the cost vectors $a_1,a_2,\ldots$ are i.i.d with  $\Ex[a_i] = a$ and satisfy the intrinsic bounds \eqref{nat}. Suppose we run Algorithm 1 with domain $\P$ and parameter $\eta=D/2L$ and starting point $y_1 \in \P$.  Then the pseudo-regret satisfies
	
	\begin{align*}
	\Ex \left [\sum_{i=1}^{\infty}   a \cdot( x_i-v^*)   \right ]  \le   \frac{L_\infty D}{W} + \left( 31 L_\infty\left (  2L_\infty  + \! \sqrt{ \frac{\pi}{2} }R_\infty \! \right )      \!+ \!    15 R^2_\infty\right) \frac{D^2}{W^2 \DD}    = O \left( \frac{L^2_\infty D^2}{W^2\DD}\right).
	\end{align*}
	
	
\end{theorem}

\section{Examples with Intrinsic Bounds}
  
\noindent Here we examine the  polytopes  from Section 3 under the intrinsic  bounds (\ref{nat}) on the cost vectors. 
In Table 2 the columns $D,W$ and $V$ are the diameter, width and number of vertices. The values for $D$ are exact except for $\P(d)$ and $\P_\pm(d)$ where they are exact as $d \to \infty$. The values for $W$ are exact for the simplex and cube and are lower bounds for $\B(n), \P(d),\P_\pm(d)$. See Appendix A for discussion. The antagonistic and i.i.d bounds for Gradient Descent come from Theorems  \ref{worstcase2} and \ref{T3} respectively. The Hedge bounds refer to the lifting procedure detailed in Section 3 and come from  \cite{OptimalHedge}.

\begin{table}[!h]\def\arraystretch{1.2} \centering 
	\begin{tabular}{l|c|c|c|rrr}
	
\multicolumn{1}{c}{ }& 
\multicolumn{3}{c}{Dimensions}&
\multicolumn{1}{r}{Algorithm}&
\multicolumn{1}{c}{Antagonistic regret}&
\multicolumn{1}{c}{i.i.d pseudo-regret\vspace{1mm}}  \\ \hline  
Polytope               & $D^2$ & $V$ & $W^2$ & Gradient Descent & $ \ds \frac{L_\infty D }{  W} \sqrt N   $ & $\ds \frac{D^2 L_\infty^2  }{W^2 \DD^2} \rule{0pt}{4.5ex} $   \\ [5pt]  
		&   &   &    & Hedge & $ L_\infty \sqrt{ \log (V)  N }  $ &  $\ds \frac{L_\infty^2  \log (V)}{\DD} $   \\[7pt] \hline
		$d$-Simplex           & $ 2$ & $d$ & $  \ds \frac{2}{d-1} \rule{0pt}{4ex}   $ & Gradient Descent  &$ L _\infty\sqrt {dN}   $& $ \ds  L^2_\infty d/\DD  $  \\[5pt]
		&&&& Hedge & $ L_\infty \sqrt { \log (d)N} $ &  $\ds  L^2 _\infty \log (d) /\DD $ \\  [5pt]
		$d$-Cube              & $4 d$ &  $2^d$& $4$ & Gradient Descent & $L_\infty \sqrt {dN}   $&  $ L^2_\infty d /\DD $   \\
		&&&&Hedge&$L_\infty \sqrt {dN}$ &   $   L^2_\infty d /\DD  $ \\[5pt] 
		$\B(n)$  & $2n$  &  $n!$& $  \ds \frac{4}{ n-1 } $ &   Gradient Descent&$L_\infty  n \sqrt {N} $& $ L^2_\infty n^2 /\DD $ \\ 
		&&&& Hedge &$L_\infty \sqrt {n \log (n)N}$&  $  L^2_\infty n \log (n) / \DD $ \\  [5pt]
		$\P(d)$ &  $\ds \frac{d^3 }{3}$ &  $d!$ & $ \ds \frac{5d^2}{6}$ &   Gradient Descent &$L_\infty \sqrt {dN}$ & $ L^2_\infty d /\DD $     \\ 
		&&&& Hedge &$L_\infty \sqrt {d \log(d)N}$&  $  L^2_\infty d \log (d) / \DD $ \\  [5pt]
		$\P_\pm (d)$  &  $ \ds \frac{2  d^3  }{3}$ & $d^2 d!$  & $ \ds \frac{4d^2}{3} $ &  Gradient Descent  & $ L_\infty \sqrt {dN}  $ &$ L^2_\infty d /\DD $  \\
		&&&& Hedge &    $ L_\infty \sqrt {d \log(d)N}  $& $L^2_\infty d\log (d) /\DD  $ \\[5pt]  \hline 
	\end{tabular} 
	
	\captionsetup{format=hang}  
	\centering 
	
	\caption{\label{table}\openup .2em Comparison of Gradient Descent and Lifted Hedge under  intrinsic bounds on cost vectors $|a_n \cdot(x-y)| \le L_\infty$ and $|(a_n-a) \cdot(x-y)| \le R_\infty$ for all $x,y \in \P$. }
\end{table}   
\subsection{Discussion}

It is no surprise Hedge scales better with dimension than Gradient Descent on the simplex. Unfortunately the simplex is also the only example where Hedge is feasible in high dimensions. For the cube the Hedge and Gradient Descent bounds have the same order. However only   Gradient Descent  is feasible as the number of vertices increases.

For  the Birkhoff polytope,  Gradient Descent has an extra factor of  $\sqrt {n/\log (n)}  $ compared to Hedge. For example   $n=10$ gives  $\sqrt {n/\log (n)} \simeq 2.084 \ldots$ and the bound is roughly double. On the other hand  there are $n!=3628800$ vertices and running Hedge is computationally unfeasible. For comparison the main cost of Gradient Descent is projecting onto the domain. This can be done  using Franke-Wolfe (\cite{RevisitingFrankeWolfe} at cost $O(n^3)$. If we are satisfied with an  approximately feasible  point we can instead use Lagrange multipliers  with cost $O(n^2)$.

For the signed permutahedron the Gradient Descent bounds are slightly better than Hedge. The extra $\sqrt{\log(d)}$ factor suggests the permutahedron is {\it rounder} than the Birkhoff polytope. Unfortunately this same roundness means the polytope has $2^n$ facets. This makes the cost of projecting using Franke-Wolfe or Lagrange multipliers prohibitively large. Fortunately there exist more sophisticated methods (\cite{PProject1,PProject2})that use the polytope's structure to project with cost only $O(n)$.


 \section{Computational Cost}  The advantage of Gradient Descent over Hedge is the action vectors have length $d$ rather than $V$. For example the cube of dimension $d$ has $V = 2^d$ vertices and the cost of computing all the Hedge components  grows exponentially. 
 For comparison the most expensive part of Gradient Descent is projecting onto the domain. 
 
 Consider the example polytopes in Tables 1 and 2.  For the cube projection is straightforward, simply project each of the $d$ components onto the closed interval. For more complex polytopes the most efficient methods scale with the number of facets rather than vertices. For example we can use Lagrange Multipliers to find an approximate projection, taking one multiplier for each facet of the domain. 
 For the Birkhoff polytope (see Example 1) we can count the facets using the  formulation  
    $$\B(n) = \left\{x \in \RR^{n \times n}: x^i_j \ge 0 \text{ and }   \sum_{k=1}^d x^i_k \text{ for all }i,j \le d \right\}.$$ 
 There is one facet per entry of the matrix  and one facet per row and column. Hence the polytope has $F =  n^2 + 2n = O(n^2)$ facets and the Lagrange iteration has cost $O(n^2) = O(d)$. The downside is our action might lie outsite the domain $-$ though it tends towards a point in the domain with further iterations. The Frank-Wolfe Method has the advantage that it  returns a feasible point, at the cost of solving a sequence of linear problems on the polytope. For the Birkhoff polytope (\cite{RevisitingFrankeWolfe}) the iteration has cost $O(n^3)$. 
 
 The permutahedron (see Example 1) is resistent to such methods as it has $2^d$ faces, one for each subset of $\{1,2,\ldots, d\}$. Fortunately optimisation on the permutahedron can be reduced to optimisation on Birkhoff using the method of extended  formulation (\cite{ conforti2010extended,kaibel2011extended,goemans2015smallest,rahmanian2016online}). There exists a linear surjection $\phi:\B(d) \to \P(d)$ given by $\phi(x)  = \sum_{j=1}^d (j x^1_j,\ldots, j x^d_j ) $.
 Hence given a convex function $f: \P(d) \to \RR$ we can instead  optimise $\phi \circ f : \B(d) \to \RR$  using the methods of the previous paragraph.
\subsection{Barrier Functions}
After Hedge and Gradient Descent, the most familiar online optimisation algorithms use barrier functions.  In Section 3 we  generalise Hedge from the simplex to general polytopes, by replacing the polytope with a simplex with the same number of vertices. The Barrier algorithm is an alternate generalisation where facets take precedence rather than vertices. For example vanilla Hedge on the simplex is a special case of Barrier for  all  $\phi_j(x)=x(j)$ and   $\eta_j =1$. We suspect the Barrier algorithm scales better computationally since there is one barrier per facet rather than per vertex.

For simplicity suppose the domain $\P \subset \RR^d$ has nonvoid interior and $F$ facets. We can write $\P = \{x \in \RR^d: \phi_j (x)  \ge 0 \text{ for } i =1,2,\ldots ,F\}$ for some affine functions $\phi_j : \RR^d  \to \RR$ corresponding to the facets. This gives the following algorithm.

\begin{algorithm}[!h]\caption{Anytime Barrier Function Algorithm}
	\DontPrintSemicolon 
	\KwData{Polytope action set $\P \subset \RR^d$. Base Point $x_1 \in \P$. Affine functions $\phi_1,\ldots, \phi_F  : \RR^d \to \RR$. Parameters $\eta_1,\ldots, \eta_F >0$. \;}
	select action $x_1  \in \P$\;
	\text{pay cost} $a_1 \cdot x_1$	\;
	\SetKwBlock{Loop}{Loop}{EndLoop}
	\For{$n=2,3,\ldots$}{
		
		\text{recieve} $a_{n-1}$\;
		
		\text{select action} $\ds x_n = \substack{\am \\ x \in \P} \left ( \sum_{i=1}^F \eta_j  \phi_j (x) \log ( \phi_j(x)) + \frac{1}{\sqrt{n-1}}\sum_{i=1}^{n-1} a_j \cdot x \right)$\;
		\text{pay cost} $a_n \cdot x_n$	
	}
\end{algorithm}
\noindent  
It is an open problem whether the Barrier algorithm is universal, and how  the  computational  cost and regret bounds compare to Gradient Descent and Hedge. We suspect Barrier gets the best results on the cube. By symmetry we should select  $\eta_1 = \eta_2 = \ldots = \eta_F$. For more general problems we suspect it is important to tune   $\eta_j$ based on the dimensions of the polytope.

\subsection{Higher-Order Estimates}

\noindent It was recently proved  (\cite{FTLBall, GDStronglyCurved})  that Gradient Descent is universal on a strongly convex domain, with $O(\sqrt{\log N})$ pseudo-regret in the i.i.d setting.
One shortcoming of our Theorems \ref{T2} and \ref{T3} is they do not recover the $\log(N)$ bounds as the polytope approaches a strongly convex domain. 

For example if the domain is a regular $n$-gon we would like the pseudo-regret bound to recover the $\log(N)$ bound for the unit ball as $n \to \infty$. This does not happen however. Instead the $O\left(L^2 D^2/\DD\right )$ bound from Theorem \ref{T2} goes to infinity as    $\DD \to 0$ and $D,W \to 2$.  To fix this one idea is to replace the estimate from Lemma \ref{conebound} $$ \theta_v = \frac{1/2}{1+D^2\|a\|^2/\DD_v^2}-1$$

with a more sophisticated quantity. For example a second-order estimate in terms of the dihedral angles and side-lengths of the polytope. For example small sides and large angles means the polytope {\it curves} upwards from its lowest point at a faster rate. This should give better estimates than Lemma \ref{conebound}. On a similar note we predict that explicitly computing $\theta_v$ for the polytopes in Tables 1 and 2 will give better performance bounds than simply plugging the dimensions of the polytopes into Theorems \ref{T2} and \ref{T3}. This may yield  better choices of the hyperparameter $\eta$ in Algorithm 1.

\section*{Acknowledgements}
This work was supported by Science Foundation Ireland grant 16/IA/4610.

\bibliography{bibfile} 
\bibliographystyle{plainnat} 

\section*{Appendix A: Dimensions of Polytopes}

\noindent Here we derive bounds for the width and diameter of the polytopes in Tables 1 and 2. To our knowledge  the widths of Examples  3-5    do not appear in the literature  at all.   Thanks to David E \cite{Wide} for suggesting the probabilistic counting trick used in those examples. 

Recall Definition \ref{widthdef} of the width:  Let $\X \subset \RR^d$ be convex    with direction $U = \{t(x-y):x,y \in \X,t \in \RR \}$. For each $\ell \in \RR^d$ let $W_\ell$ be the length of the interval $\{ \ell \cdot x: x \in \X\}$. The width of $\X$ is defined as $W = \inf \{ W_\ell: \ell \in U,  \|\ell\|=1 \}$. \\

\noindent {\bf Example 1 }We claim   an $m$-dimensional cuboid $\prod _{j=1}^m [\AA_j,\BB_j]$ embedded in $\RR^d$ has width  $W= \min_j|\BB_j-\AA_j|$. Since the affine hull is linearly isometric with $\RR^m$ we can assume $m=d$ and the embedding is the standard. We can also translate the cube to put the centre of mass at the origin and hence assume the cuboid is $\prod _{j=1}^d [-\G_j,\G_j]$ for $\G_j = \frac{1}{2}|\BB_j-\AA_j|$. Consider  $\ell \cdot(x-y) \ge 2\G_j  $ for  $x,y\in \prod _{j=1}^m [-\G_j,\G_j]$. By permuting the coordinates we can assume $\ell=(l_1,\ldots, l_n, -l_{n+1}, \ldots, -l_d)$ for all $l_j \ge 0$. Then $\ell$ is maximised (minimised) over the cube at $\pm p$ where $p= (\G_1 ,\ldots, \G_n , -\G_{n+1} , \ldots, -\G_d )$ has exactly $n$ positive entries. Hence $W_\ell =   2 \sum_{j=1}^d l_j \G_j \ge 2 \min_j \G_j \sum_{k=1}^d l_k =   2 \min_j \G_j \|\ell \|_1 \ge 2 \min_j \G_j \|\ell \|_2 =  2 \min_j \G_j =   \min_j |\BB_j-\AA_j|  $. \\
 
\noindent {\bf Example 1.5 } The cube has diameter $2\sqrt {\sum _{j=1}^m |\BB_j - \AA_j|^2}$ due to the following lemma.\\

\begin{lemma}\normalfont Each polytope $\P$ has vertices $u,v$  with $\|u-v\| = \max \{\|x-y\|:x,y \in \P\}$.
\end{lemma}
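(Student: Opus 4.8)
The plan is to show that the maximum of the convex function $f(x,y) = \|x-y\|$ (equivalently $\|x-y\|^2$, which is convex jointly in $(x,y)$) over the compact convex set $\P \times \P$ is attained at an extreme point of $\P \times \P$, and then observe that the extreme points of $\P \times \P$ are exactly the pairs $(u,v)$ with $u,v$ vertices of $\P$.

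First I would recall the general principle that a convex function on a nonempty compact convex set attains its maximum at some extreme point. Concretely, let $D = \max\{\|x-y\| : x,y \in \P\}$, which exists by compactness of $\P \times \P$ and continuity of the norm, and pick $(x_0,y_0) \in \P \times \P$ achieving it. Since $\P$ is a polytope, $\P \times \P$ is also a polytope, so $(x_0,y_0)$ is a convex combination $\sum_k \lambda_k (u_k,v_k)$ of finitely many vertices $(u_k,v_k)$ of $\P \times \P$ with $\lambda_k > 0$ and $\sum_k \lambda_k = 1$. By convexity of $g(x,y) = \|x-y\|^2$ we get $g(x_0,y_0) \le \sum_k \lambda_k g(u_k,v_k) \le \max_k g(u_k,v_k)$; since $g(x_0,y_0)$ is the maximum value $D^2$, equality forces $g(u_k,v_k) = D^2$ for every $k$ (all terms in the average must equal the max). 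So any vertex $(u,v) := (u_k,v_k)$ of $\P \times \P$ gives $\|u-v\| = D$.

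Next I would identify the vertices of the product polytope: a point $(u,v)$ is a vertex (extreme point) of $\P \times \P$ if and only if $u$ is a vertex of $\P$ and $v$ is a vertex of $\P$. This is a standard fact — if, say, $u$ were a proper convex combination $\tfrac12(u'+u'')$ with $u' \neq u''$ in $\P$, then $(u,v) = \tfrac12\big((u',v)+(u'',v)\big)$ would be a proper convex combination in $\P\times\P$, contradicting extremality; the converse is equally routine. Combining with the previous paragraph, the vertices $u,v$ of $\P$ obtained this way satisfy $\|u-v\| = \max\{\|x-y\| : x,y \in \P\}$, which is the claim.

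I do not anticipate a serious obstacle here; the only point requiring a little care is the convexity step — one should work with $\|x-y\|^2$ rather than $\|x-y\|$ to have a clean convex (indeed smooth) function on $\P\times\P$, and then note that maximising $\|x-y\|^2$ is the same as maximising $\|x-y\|$. Alternatively, since $\P$ is a polytope one could avoid invoking the extreme-point principle abstractly and instead argue directly on vertex expansions as above, which keeps the argument self-contained. Everything else is bookkeeping.
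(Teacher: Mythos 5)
Your proof is correct, and it takes a mildly different route from the paper's. You work in one pass on the product polytope $\P \times \P$, invoke joint convexity of $(x,y)\mapsto\|x-y\|^2$ (or, as you could also have noted, of $(x,y)\mapsto\|x-y\|$ directly, since the triangle inequality gives convexity without squaring), and identify the extreme points of $\P\times\P$ as pairs of vertices. The paper instead makes two single-variable passes: for fixed $x$ it expands $y=\sum_i \lambda_i v_i$ in vertices and applies the triangle inequality to show $\max_y\|x-y\|$ is attained at a vertex $v$, and then repeats the same argument with $v$ held fixed to move $x$ to a vertex $u$. The content is the same triangle-inequality/convexity observation; the difference is that your version is more global and general-purpose (it is exactly the statement that a convex function on a polytope is maximised at a vertex, applied to $\P\times\P$) while the paper's is a concrete two-step computation that avoids discussing vertices of the product. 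One small remark: your line that ``equality forces $g(u_k,v_k)=D^2$ for every $k$'' is true but unnecessary --- you only need that some vertex in the expansion attains the maximum, which follows immediately from $\max_k g(u_k,v_k)\ge D^2$.
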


\begin{proof}The domain is compact so  $\max \{\|x-y\|:x,y \in \P\}=   \|x-y\|$ for some $x,y \in \P$. We claim $D = \|x-v\|$ for some vertex $v$. For write $y = \sum_{i=1}^V \ll_j  v_j$ as a convex combination of the vertices. Then we have 
\begin{align*}
 \|x-y\| = \bigg \| x-\sum_{i=1}^V \ll_j  v_j\bigg \| =\bigg \| \sum_{i=1}^V \ll_j  x-\sum_{i=1}^V \ll_j  v_j \bigg  \| \le \sum_{i=1}^V \ll_j \|   x - v_j   \| \le \max_{i \le V} \|x-v_j\|.
\end{align*} Thus the max is achieved for $y$ a vertex. Likewise it is achieved for $x$ a vertex. 
\end{proof}

\noindent {\bf Example 2 } The  $d$-simplex $\S$ has width   asymptotically equal to $2/\sqrt{ d }$.  The authors were unable to find a modern proof of this fact. The standard proof  seems to be \cite{SimplexWidth}. Unfortunately the author refers to {\it fundamental properties of convex sets} that were perhaps more well-known at the time. Therefore we refer to (1.9) of \cite{gritzmann1992inner}.

\begin{theorem} \normalfont
 Suppose the polytope $\P \subset \RR^N$ has non-empty interior and width $W$.  There exists a unit vector $\ell \in \RR^N$ and faces $A,B$ of $\P$ and points $a \in A$ and $b \in B$ such that  $$(1) \  \{\ell \cdot x: x \in \P\} \text{ has length } W  \qquad \qquad  (2)\  \dim (A-B) = N-1.$$ In particular $\dim A + \dim B \ge N-1$.
\end{theorem}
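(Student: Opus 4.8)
\medskip
\noindent The plan is to reformulate everything through the support function and the difference body. Write $h_\P(\ell) = \max\{\ell\cdot x : x \in \P\}$, so that $W_\ell = h_\P(\ell)+h_\P(-\ell) = h_K(\ell)$, where $K = \P-\P = \{x-y : x,y\in\P\}$ is a centrally symmetric polytope with nonempty interior. The unit sphere of $\RR^N$ is compact and $h_K$ is continuous, so the width $W = \min\{h_K(\ell):\|\ell\|=1\}$ is attained at some unit vector $\ell$; fix it. Set $A = \P\cap\{x:\ell\cdot x = h_\P(\ell)\}$ and $B = \P\cap\{x:\ell\cdot x = -h_\P(-\ell)\}$. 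These are the faces of $\P$ cut out by the two supporting (tangent) hyperplanes perpendicular to $\ell$, and since $\{\ell\cdot x : x\in\P\}= [-h_\P(-\ell),\,h_\P(\ell)]$ is an interval of length $W_\ell = W$, conclusion $(1)$ is immediate.

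For $(2)$ I would first record the Minkowski face identity $A - B = F_\ell(K)$, where $F_\ell(C) := C\cap\{z:\ell\cdot z = h_C(\ell)\}$ is the face of a convex body $C$ exposed by $\ell$. This follows from the elementary facts $F_\ell(C_1+C_2)=F_\ell(C_1)+F_\ell(C_2)$ and $F_\ell(-\P) = -F_{-\ell}(\P) = -B$, applied to $K = \P + (-\P)$. So it remains to prove $\dim F_\ell(K) = N-1$.

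The heart of the argument is that minimality of $\ell$ forces this. Suppose instead $\dim F_\ell(K)\le N-2$. Since $F_\ell(K)$ lies in the hyperplane $\{\ell\cdot z = h_K(\ell)\}$, its direction subspace $U_0 = \operatorname{lin}(F_\ell(K)-F_\ell(K))$ is contained in $\ell^\perp$ and has dimension $\le N-2$, so there is a unit vector $u\in\ell^\perp$ orthogonal to $U_0$; then $\ell\cdot z$ and $u\cdot z$ are both constant on $F_\ell(K)$, say $u\cdot z = c$ there, and after replacing $u$ by $-u$ we may assume $c\le 0$. Now perturb $\ell$ toward $u$: put $\ell_\theta = (\cos\theta)\ell + (\sin\theta)u$, a unit vector. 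Here I use that $K$ is a \emph{polytope}: the $\ell$-value of every vertex of $K$ lying outside $F_\ell(K)$ is at most $h_K(\ell)-\delta$ for a fixed $\delta>0$, while $|u\cdot v|\le M := \max_{v\in K}\|v\|$ for every vertex $v$. Hence once $\theta>0$ is small enough that $M\sin\theta < \delta\cos\theta$, each vertex $v\notin F_\ell(K)$ has $\ell_\theta\cdot v < (\cos\theta)\,h_K(\ell)$, whereas each vertex $v\in F_\ell(K)$ has $\ell_\theta\cdot v = (\cos\theta)\,h_K(\ell)+(\sin\theta)c\le(\cos\theta)\,h_K(\ell)$. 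Taking the maximum over vertices gives $h_K(\ell_\theta)\le(\cos\theta)\,h_K(\ell) < h_K(\ell)$, the last step because $h_K(\ell)>0$ (as $\P$ has interior). This contradicts minimality of $\ell$, so $\dim F_\ell(K)=N-1$, which is $(2)$. The ``in particular'' clause then drops out: the direction subspace of $A-B=A+(-B)$ is $V_A+V_B$, where $V_A,V_B$ are the direction subspaces of $A,B$, so $N-1 = \dim(A-B) = \dim(V_A+V_B)\le\dim A + \dim B$.

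I expect the perturbation step to be the main obstacle, and indeed it is the only place the hypothesis ``polytope'' is genuinely used — it must be, since for a Euclidean ball the width is the same in every direction and every exposed face is a single point, so the statement fails without some non-roundness assumption. The two things to get right there are (i) choosing the sign of $u$ so that the first-order term $(\sin\theta)c$ does not work against us, and (ii) the uniform gap $\delta$ between $F_\ell(K)$ and the remaining vertices, which is precisely what polytope-ness supplies and what a general convex body lacks. Everything else — compactness for the existence of the minimiser, the Minkowski face identities, and the dimension bookkeeping in the last line — is routine.
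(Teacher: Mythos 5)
Your proof is correct, and it takes a genuinely different route from the paper's. Both arguments work through the difference body, but the paper outsources the two crucial steps to the literature: it cites Gritzmann–Klee (1.8)–(1.9) for the fact that the width-attaining point of the symmetric body $\tfrac{\P-\P}{2}$ lies in the relative interior of a \emph{facet}, and then cites Weibel's thesis (Theorem 3.1.2) to decompose that facet as $\tfrac{A-B}{2}$ for faces $A,B$ of $\P$. You prove the facet-dimension claim from scratch via the perturbation argument — tilting $\ell$ toward a direction $u\in\ell^\perp\cap U_0^\perp$, using the uniform vertex gap $\delta>0$ to show the support function of $K$ strictly decreases — and you derive the face decomposition $A-B=F_\ell(K)$ directly from the elementary Minkowski identity $F_\ell(C_1+C_2)=F_\ell(C_1)+F_\ell(C_2)$. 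The net effect is a self-contained argument where the paper's is a bibliographic one; your version also makes explicit exactly where the polytope hypothesis enters (the gap $\delta$), which is a useful observation the paper leaves implicit and which clarifies why the statement fails for, say, a Euclidean ball. The one detail worth double-checking in your write-up is the sign choice for $u$: the symmetry argument goes through because after replacing $u$ by $-u$ the constant $c=u\cdot z$ on $F_\ell(K)$ satisfies $c\le 0$, so the first-order term $(\sin\theta)c$ is nonpositive; you handle this correctly, and the edge case $c=0$ is covered since $h_K(\ell_\theta)\le(\cos\theta)h_K(\ell)<h_K(\ell)$ already uses $\cos\theta<1$.
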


Note the given proof has a typographical error. For the proof to work we must use $(1.8)$ of that paper to take the points $q_{\pm }$ of the form $q_\pm = \pm q$ for some $q \in \frac{\P-\P}{2}$. Then $\ell \cdot (q_+  -q_- )$ is the width of $\frac{\P-\P}{2}$. 
As stated in the proof $q_+=q$ is in the relative interior of some facet $F$  of the symmetric polytope $\frac{\P-\P}{2}$ and so $q_-=-q$ is in the relative interior of the facet $-F$.  Theorem 3.1.2 of \cite{weibel2007minkowski} says $\P$ has  faces $A,B$   with $F = \frac{G-H}{2}$. Hence $N-1 = \dim F = \dim \big( \frac{A-B}{2}\big )= \dim(A-B) \le \dim A + \dim B$. 
Choose $a \in A$ and $b \in B$ with $\frac{a-b}{2} = q$.  It follows from   $(1.4)$ and $(1.5)$ of \cite{gritzmann1992inner} that $\P$ and $\frac{\P-\P}{2}$ have the same width. Thus $W = \ell \cdot (q_+  -q_- ) = 2 \, \ell \cdot q = 2\, \ell \cdot \big (\frac{a-b}{2} \big ) = \ell \cdot (a-b)$ as required.

 
  To use the theorem identify the affine hull of the $d$-simplex with $\RR^N$ for $N=d-1$. 
  The theorem gives faces $A$ and $B$ with $\dim A + \dim B \ge d-2$. The face $A$ contains $I$ of the vertices and $B$ contains $J$ of the vertices.  Since the dimension of a face is one less than the number of vertices it contains we have  $I+J -2 \ge d-2 $ and $I + J \ge d$ and so  $A \cup B$ contains all the vertices. Without loss of generality $A$ contains some $v_1,\ldots, v_n$ and $B$ contains $v_{n+1},\ldots, v_d$. Thus $\ell = (a,\ldots, a,b,\ldots,b)$ for some  $a \le b$. 
Since   $\ell \cdot \1=0$ we see $\ell$ is a scalar multiple of $u = \left(-\frac{1}{n} ,\ldots,-\frac{1}{n}, \frac{1}{d-n},\ldots, \frac{1}{d-n}\right)$. Since $\|u\|^2 = \frac{1}{n}+\frac{1}{d-n} = \frac{d}{n(d-n)}$ and $\|\ell\|^2=1$ we get $\ell = \sqrt{\frac{n(d-n)}{d}}\left(-\frac{1}{n} ,\ldots,-\frac{1}{n}, \frac{1}{d-n},\ldots, \frac{1}{d-n}\right)$ and so $$W = b-a = \sqrt{\frac{n(d-n)}{d}}\left( \frac{1}{d-n} + \frac{1}{n}\right)=\sqrt{\frac{n(d-n)}{d}}\frac{d}{n(d-n)} =\sqrt{\frac{d}{n(d-n)}}.$$ The function is decreasing for $n \le d/2$ and increasing thereafter. Hence the minimum value occurs at $n=d/2$ for $d$ even and $n=\lceil d/2 \rceil,\lfloor d/2 \rfloor$ for $d$ odd. In the first case we get $W =  2 / \sqrt d $ and in the second $W =  \frac{2}{\sqrt d } \sqrt{ \frac{ d/2 }{\lceil d/2 \rceil }}\sqrt{ \frac{ d/2 }{ \lfloor d/2 \rfloor}}$ which equals $2/\sqrt d$ as $d \to \infty$.\\

\noindent {\bf Example 2.5 } The distance between any two vertices of the simplex is $\sqrt 2$. Thus the diameter is $D = \sqrt 2$ independent of dimension. \\
 
 \noindent {\bf Example 3 }  The Birkhoff Polytope $\B$ is the set of nonnegative $n\times n$ matrices with all row and column sums equal to $1$. Equivalently the convex hull of the $n!$ permutation matrices. Identify each permutation $\si \in \Sigma_n$ with the corresponding matrix. We claim $W \ge  2/\sqrt{n-1}$. To prove this let $\ell \in \RR^{n \times n}$ have $\|\ell\|^2=1$ and all $\Sum \ell^i_j= \sum_{j=1}^n \ell^i_j=0$.
 
 The proof uses a  probabilistic counting trick. Let $\si \in S _n$ be a uniformly-chosen permutation matrix and consider the random variables $X = \ell \cdot \si $. We can write 
 
 \begin{align*}
   \sum_{\si \in S_n} (\ell \cdot \si )^2 = \sum_{\si \in S_n} \left (\sum_j \ell^i_{\si(i)} \right )^2  = \sum_{\si \in S_n} \left ( \sum_j  \big (\ell^i_{\si(i)} \big )^2  +   \sum_{i \ne j}   \ell^i_{\si(i)} \ell^j_{\si(j)} \right )
 \end{align*} 
 
 For each pair $(i,j)$ the term $(\ell^i_{j} \big )^2 $ appears in the expansion $(n-1)!$ times, since this is the number of permutations with $\si(i)=j$. For each tuple $(i,j,a,b)$ with $i \ne j$ and $a \ne b$ the term  $\ell^i_{a} \ell^j_{b}$ appears $(n-2)!$ times since there are $(n-2)!$   permutations with $\si(i)=a$ and $\si(j)=b$. Hence the above equals
 \begin{align*}
   (n-1)! \sum_{i,j \le n} (\ell^i_{j} \big )^2 + (n-2)! \sum_{\substack{i \ne j\\a \ne b} } \ell^i_{a} \ell^j_{b} = (n-1)!  + (n-2)! \sum_{ i,a} \ell^i_{a}  \sum_{ j \ne i } \sum_{ b \ne a  }\ell^j_{b}
 \end{align*} 
 
 where we have used $\|\ell\|^2=1$ to simplify the first term. For the second term, since row $j$ and column $b$ sum to zero we have
 
 \begin{align*}
    \sum_{ i,a} \ell^i_{a}  \sum_{ j \ne i } \sum_{ b \ne a  }\ell^j_{b} = \sum_{ i,a} \ell^i_{a}  \sum_{ j \ne i } (-\ell^j_{a}) = - \sum_{ i,a} \ell^i_{a}  \sum_{ j \ne i } \ell^j_{a} =   - \sum_{ i,a} \ell^i_{a}(-\ell^i_{a}) = \sum_{ i,a} (\ell^i_a)^2 =1.
 \end{align*} 

 We conclude
 \begin{align*}
   \sum_{\si \in S_n} (\ell \cdot \si )^2 = (n-1)! + (n-2)! = n(n-2)!
 \end{align*} 

 Hence $X = \ell \cdot \si$ has variance $n(n-2)!/n! = 1/(n-1)$ and standard deviation $1/\sqrt{n-1}$. Popoviciu's inequality says the standard deviation is at most $\frac{\max X - \min X}{2}$. From this we get $\max X - \min X \ge 2/\sqrt{n-1}$ as required.\\

\noindent {\bf Example 3.5 } The diameter is achieved for any pair of permutation matrices  with no nonzero entries in common. Thus we have $D^2 = 2 n$. \\

 \noindent {\bf Example 4} The permutahedron $\P$ is the set of vectors $x \in \RR^d$ with entries $\{x_1,\ldots, x_d\} = \{1,2,\ldots, d\}$.  Equivalently $\P$ is the convex hull of $\{(\si(1),\ldots, \si(d)): \si \in S_d \}$. Identify the permutation $\si \in S_d$ with the vector $(\si(1),\ldots, \si(d)) \in S_d$. We claim 
 
 $$W\ge  \sqrt{\frac{5d^2 +8d +4 }{6}} \qquad \qquad   \liminf_{d \to \infty} \frac{|P |}{d} \ge \sqrt{5/6} $$ We use the same  variance trick with $X=\ell \cdot \si$

 \begin{align*}
   \sum_{\si \in S_n} (\ell \cdot \si )^2 = \sum_{\si \in S_n} \left (\sum_j \si(i)\ell_j  \right )^2 = \sum_{\si \in S_n} \left ( \sum_j  \si(i)^2 \ell_j ^2  +   \sum_{i \ne j}   \si(i)\si(j)\ell_j \ell_j \right ) 
 \end{align*} 
 
 For each $\si$ we have $\si(1)^2 + \ldots + \si(d)^2 = \frac{d(d+1)(2d+1)}{6}$. Thus the sum of coefficients in the first sum is  $$\sum_{\si \in S_n}  \sum_j  \si(i)^2 = \frac{d(d+1)(2d+1)}{6}d! =  \frac{d(2d+1)(d+1)!}{6}.$$ By symmetry each $\ell_j^2$ appears in the expansion with multiplicity $\frac{(2d+1)(d+1)!}{6}$. The sum of coefficients in the second sum is 
 
 \begin{align*}\sum_{\si \in S_n}  \sum_{i \ne j}   \si(i)\si(j)  &= \frac{1}{2}\sum_{\si \in S_n}  \Big ( \big (\si(1) + \ldots + \si(d)\big)^2 -  \si(1)^2 - \ldots - \si(d)^2\Big )\\ 
 &= \frac{1}{2}\sum_{\si \in S_n}  \left ( \left ( \frac{d(d+1)}{2}\right)^2   - \frac{d(d+1)(2d+1)}{6}\right )\\
 &= \frac{1}{2}\sum_{\si \in S_n}  \left (   \frac{d^2(d+1)^2}{2}    - \frac{d(d+1)(2d+1)}{6}\right )\\
 &= \frac{1}{2}\sum_{\si \in S_n}      \frac{3d^4 +4d^3 -d}{6}  =  \frac{(3d^4 +4d^3 -d)d!}{12}        
 \end{align*} 
  
 Since there are $d(d-1)$ choices for the pair $(i,j)$ with $i \ne j$ we have by symmetry that each $\ell_j \ell_j$ appears in the expansion with multiplicity $\frac{(3d^3 +4d^2 -1)(d-1)!}{12}$. Thus we have shown

 \begin{align*}
   \sum_{\si \in S_n} (\ell \cdot \si )^2 &=  \frac{(2d+1)(d+1)!}{6} \Sumd \ell_j^2 + \frac{(3d^3 +4d^2 -1)(d-1)!}{12} \sum_{i \ne j} \ell_j \ell_j
 \end{align*} 
 
 To simplify the first term recall $\Sumd \ell_j^2  = \|\ell\|^2=1$. For the second write 
 \begin{align*}
    \sum_{i \ne j} \ell_j \ell_j = \frac{1}{2}\Big ( \big (\ell_1 + \ldots + \ell_d\big)^2 -  \ell_1^2 - \ldots - \ell_d^2\Big ) - \frac{\|\ell\|^2}{2} = - \frac{1}{2}.
 \end{align*} 
 
 Thus we have 
 
 \begin{align*}
   \sum_{\si \in S_n} (\ell \cdot \si )^2 &=  \frac{(2d+1)(d+1)!}{6} - \frac{(3d^3 +4d^2 -1)(d-1)!}{24} \\
   & \ge   \frac{(2d+1) (d+1)!}{6} - \frac{(3d^2 +4d)d!}{24} =   \frac{4(2d+1)(d+1)   -(3d^2 +4d) }{24} d!
 \end{align*}
 
 and the variance is

 \begin{align*}
     \frac{4(2d+1)(d+1)   -(3d^2 +4d) }{24}  =  \frac{8d^2 +12d+4  -(3d^2 +4d) }{24} = \frac{5d^2 +8d +4 }{24}
 \end{align*}
 
 and standard deviation 
 
 \begin{align*}
     \sqrt{\frac{5d^2 +8d +4 }{24}} = \frac{1}{2}\sqrt{\frac{5d^2 +8d +4 }{6}} 
 \end{align*}
 
 Like before we see $\max X - \min X \ge  \sqrt{\frac{5d^2 +8d +4 }{6}} $ as required. For large $d$ the above is approximately $  \sqrt{5/6}\, d$.\\
 
\noindent {\bf Example 4.5 } We claim the diameter is achieved for the vertices $v = (1,2,\ldots, d)$ and $w =(d,d-1,\ldots, 1)$. For suppose $\si$ and $\mu$ are vertices. By symmetry we can assume $\mu $ is the identity. For some $m \le d$ we have $\si(m)=1$. Suppose $m \ne d$. We can write \begin{align*}
\|\si - \mu\|^2 & =   (1- \si(1))^2 + \ldots + (m- \si(m))^2 + \ldots + (d- \si(d))^2\\
& = \sum_{n=1}^d n^2 +\sum_{n=1}^d \si(n)^2 - 2 \sum_{n=1}^d n \si(n) =2\sum_{n=1}^d n^2 - 2 \sum_{n=1}^d n \si(n).\end{align*} 

The first term is independent of $\si$. Hence  to maximise $\|\si-\mu\|^2$ we must minimise  $\sum_{n=1}^d n \si(n)$. 
We will prove a more general statement. Suppose $0 < x_1 < x_2 <\ldots < x_d$ and $ y_1 > y_2 >\ldots > y_d>0$. We claim that $\sum_{n=1}^d x_n y_{\si(n)}$ is minimised when $\si$ is the identity. 

For a contradiction suppose $\si$ minimises but has $\si(i) > \si(j)$ for some $i < j$. Define $\tau$ by $\tau(1) = \si(2)$ and $\tau(2) = \si(1)$ and $\tau(n) =\si(n)$ otherwise. The difference is
\begin{align*}
 \sum_{n=1}^d x_n y_{\tau(n)} - \sum_{n=1}^d x_n y_{\si(n)} &= x_1 y_{\tau(1)} + x_2 y_{\tau(2)}- \big ( x_1 y_{\si(1)} + x_2 y_{\si(2)}\big)\\
 & = x_1 y_{\si(2)} + x_2 y_{\si(1)}-  x_1 y_{\si(1)} - x_2 y_{\si(2)} = (x_1-x_2) (y_{\si(2)} -y_{\si(1)} )
\end{align*}

Since $x_n$ are increasing we have  $x_1 - x_2 < 0$. Also since $\si(1) > \si(2)$ and $y_n$ are decreasing we have  $y_{\si(2)} -y_{\si(1)}>0$. Hence the right-hand-side is negative. That implies $\sum_{n=1}^d x_n y_{\tau(n)} < \sum_{n=1}^d x_n y_{\si(n)} $ and $\si$ is not a minimiser.
It follows $\|\si - \mu\|^2$ is maximised   for $\si = w$ and $\mu = v$. 

Now we claim $D^2 = \frac{d(d^2-1)}{3}$.
For even $d$ the we see \begin{align*}D^2 = \|u-w\|^2  = (d-1)^2 + (d-3)^2 + \ldots + 3^2 + 1^2 + 1^2 + 3^2 + \ldots + (d-1)^2 \\ 
 = 2 \big ( (d-1)^2 + (d-3)^2 + \ldots + 3^2 + 1^2 \big )\end{align*}
 is twice the sum of odd squares.
To compute this recall the sum of the first $d$ squares (\cite{Sums}) is $\frac{d(d+1)(2d+1)}{6}$. Hence the  sum of the first $d/2$ even squares is $$\ds  \sum_{n=1}^{d/2} (2n)^2 =   4 \sum_{n=1}^{d/2} n^2 =  4 \frac{\frac{d}{2}(\frac{d}{2}+1)(d+1)}{6} = \frac{d(d+2)(d+1)}{6}.$$
The sum of odd squares is the sum of all squares minus the sum of even squares and so equals
 
 $$ \frac{d(d+1)(2d+1)}{6}- \frac{d(d+2)(d+1)}{6} = \frac{d(d+1)(d-1)}{6} = \frac{d(d^2-1)}{6}  $$ 
 
 For odd $d$ we see $D^2$ is twice the sum $(d-1)^2 + (d-3)^2 + \ldots + 2^2$ of the first $\frac{d-1}{2}$ even squares. By the above it equals $\frac{(d-1)(d+1)d}{6}$ and so $D^2 = \frac{d(d^2-1)}{3}$ like before.\\

 \noindent {\bf Example 5} The signed permutahedron $\P_\pm$ is the convex hull of the vectors $ (\pm\si(1),\ldots, \pm\si(d))$ for all choices of signs and permutation $ \si \in S_d $. We claim
 $$|\P_\pm| \ge  2\sqrt{  \frac{2d^2 +3d +1}{6}}  \qquad \qquad   \liminf_{d \to \infty} \frac{|P_\pm |}{d} \ge \sqrt{8/6} $$ 

 For $S^d = \{-1,1\}^d $ we can write 
 
 $$\P_\pm =\{ (s_1\si(1),\ldots, s_d\si(d)) : \si \in S_d, s \in S^d\}$$
 
 Write $s \si = (s_1\si(1),\ldots, s_d\si(d)) $ and consider the random variables $X = \ell \cdot s\si $.
  
 \begin{align*}
    \sum_{s \in S^d}\sum_{\si \in S_d} (\ell \cdot s \si )^2 =  \sum_{s \in S^d}\sum_{\si   \in S_d} \left (\sum_j \si(i)\ell_j  \right )^2 &=  \sum_{s \in S^d}\sum_{\si \in S_d} \left ( \sum_j  s_j ^2 \si(i)^2 \ell_j ^2  +   \sum_{i \ne j}   s_j s_j \si(i)\si(j)\ell_j \ell_j \right )\\
    &=  \sum_{s \in S^d}\sum_{\si \in S_d} \left ( \sum_j   \si(i)^2 \ell_j ^2  +   \sum_{i \ne j}   s_j s_j \si(i)\si(j)\ell_j \ell_j \right ) 
 \end{align*} 
 
 By symmetry the second part vanishes leaving

 \begin{align*}
      \sum_{s \in S^d}\sum_{\si \in S_d}  \sum_j   \si(i)^2 \ell_j ^2 = \sum_{s \in S^d}  \frac{(2d+1)(d+1)!}{6} \sum_j \ell_j^2 =2^d \frac{(2d+1)(d+1)!}{6}    
 \end{align*} 

 The first equality uses the argument from the previous example to compute the coefficients. The second equality uses $\|\ell\|^2=1$. Since $|S^d \times S_d| = 2^d d!$ the variance is $$ \frac{(2d+1)(d+1)}{6} = \frac{2d^2 +3d  +1}{6}$$

 Like before we see $$\max X - \min X \ge  2\sqrt{  \frac{2d^2 +3d +1}{6}} $$ as required. For large $d$ the above is approximately $  \sqrt{8/6}\, d$.\\
 
\noindent {\bf Example 5.5 } The diameter is achieved for some pair $v,w$ of vertices. Similar to Example 4.5 we see $\|v-w\|^2$ is maximised for $v = (1,2,\ldots, d)$ and $w =(-1,-2,\ldots, -d)$. Then $$ D^2 = \sum_{n=1}^d (2n)^2 = 4\sum_{n=1}^d n^2  = 4 \frac{d(d+1)(2d+1)}{6} =   \frac{2d(d+1)(2d+1)}{3} .$$

\section*{Appendix B: Probability}

\noindent The concentration result used to prove Theorem \ref{T2}  is the following. \vspace{2mm}

\noindent {\bf Theorem \ref{Pinelis}} Suppose the i.i.d sequence $X_1,X_2,\ldots$ takes values in $\RR^d$. Suppose each $\Ex[X_i] = 0$ and $\|X_i\| \le R$. Then for each $r \ge 0$ we have
	$$P \left( \Big \|\sum_{i=1}^n X_i \Big\| \ge  n r \right) \le 2\exp \left( -\frac{r^2}{2 R^2} n\right).$$ 
\vspace{2mm}

The above  is a special case of \cite{GoodAH} Theorem 3.5 about vector-valued martingales. For the definition of a martingale and martingale difference sequence see for example \cite{Bill} Section 35. We need only the fact that a mean-zero i.i.d sum defines a martingale.

\begin{theorem} \normalfont\label{Pinelis1} (Pinelis Theorem 3.5) Suppose the martingale $f_1,\ldots, f_n$ takes values in the $(2,D)$-smooth Banach space $(E,\|\cdot\|)$. Suppose we have $ \|f_1\|_\infty^2 + \sum_{i=2}^n \|f_i - f_{i-1}\|_\infty^2 \le b^2$ for some constant $b$. Then for all $t \ge 0$ we have
	$$P \left( \max\{\|f_1\|, \ldots, \|f_n\|\} \ge t \right) \le 2\exp \left( -\frac{t^2}{2D^2 b^2}\right).$$
\end{theorem}
 
To explain the notation  $\|f\|_\infty = \max\{\|f(x)\|: x \in \Omega\}$ is the $\sup$ norm taken over the probability space $\Omega$. The Banach space $(E, \|\cdot\|)$ is called $(2,D)$-smooth for $D \ge 0$ to mean $\|x+y\|^2 + \|x-y\|^2 \le 2\|x\|^2 + 2D^2\|y\|^2$ for all $x,y \in E$. This generalises the parallelogram law for $\RR^d$ with the Euclidean norm, where for $D=2$ the inequality becomes an equality.

To obtain Theorem \ref{Pinelis} from Theorem \ref{Pinelis1} take $f_i =X_1 + \ldots + X_i$ and $t = nr$. We have $ \|f_1\|_\infty^2 + \sum_{i=2}^n \|f_i - f_{i-1}\|_\infty^2 = \sum_{i=1}^n \|X_i\|^2 \le n R^2$ so we can take $b = \sqrt n R$ and simplify the right-hand exponent with $D=2$.

For real-valued martingales there exists a one-sided version of the above without the leading factor of $2$. For proof of the following see \cite{MIT2}.

\begin{theorem} \normalfont[Azuma-Hoeffding] Suppose $X_1,X_2,\ldots$ is a real-valued martingale difference sequence with each $|X_i| \le R$. Then for each $r\ge0$ we have $$P \left( \sum_{i=1}^n X_i  \ge  \sqrt n r \right) \le \exp \left( -\frac{r^2}{2R^2 }\right).$$
\end{theorem}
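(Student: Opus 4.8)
The plan is to use the exponential moment method (Chernoff bounding). Write $S_n = \sum_{i=1}^n X_i$ and let $(\F_i)$ be the filtration with respect to which $(X_i)$ is a martingale difference sequence, so $\Ex[X_i \mid \F_{i-1}]=0$. For any $\lambda>0$, Markov's inequality applied to $e^{\lambda S_n}$ gives
$$P(S_n \ge \sqrt n r) = P\big(e^{\lambda S_n} \ge e^{\lambda \sqrt n r}\big) \le e^{-\lambda \sqrt n r}\,\Ex\big[e^{\lambda S_n}\big],$$
so the whole task reduces to controlling the moment generating function $\Ex[e^{\lambda S_n}]$.

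First I would establish the one-step bound: for a mean-zero random variable $Y$ with $|Y| \le R$ one has $\Ex[e^{\lambda Y}] \le e^{\lambda^2 R^2/2}$. This is Hoeffding's lemma; the proof is to use convexity of $t \mapsto e^{\lambda t}$ on $[-R,R]$ to dominate $e^{\lambda Y}$ by the chord through the endpoints, take expectations (the linear term dies since $\Ex[Y]=0$), and then apply a second-order Taylor estimate to the resulting explicit function of $\lambda$. The conditional version is verbatim the same: since $\Ex[X_i \mid \F_{i-1}]=0$ and $|X_i|\le R$ we get $\Ex[e^{\lambda X_i}\mid \F_{i-1}] \le e^{\lambda^2 R^2/2}$ almost surely.

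Next I would peel off the terms of $S_n$ one at a time using the tower property: $S_{n-1}$ is $\F_{n-1}$-measurable, so
$$\Ex[e^{\lambda S_n}] = \Ex\big[e^{\lambda S_{n-1}}\,\Ex[e^{\lambda X_n}\mid \F_{n-1}]\big] \le e^{\lambda^2 R^2/2}\,\Ex[e^{\lambda S_{n-1}}],$$
and iterating down to $S_0=0$ yields $\Ex[e^{\lambda S_n}] \le e^{n\lambda^2 R^2/2}$. Substituting into the Chernoff bound gives $P(S_n \ge \sqrt n r) \le \exp\big(-\lambda \sqrt n r + \tfrac12 n\lambda^2 R^2\big)$, and minimising the exponent over $\lambda>0$ — the optimum being $\lambda = r/(\sqrt n R^2)$ — produces exactly $\exp(-r^2/2R^2)$, as claimed.

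The only genuinely delicate step is Hoeffding's lemma (in particular getting the constant $1/2$ rather than something weaker); everything else is routine bookkeeping with the tower property and a one-variable optimisation. I would isolate Hoeffding's lemma as a short sublemma (or cite it, as the paper does via \cite{MIT2}) and keep the martingale argument to the three displays above. Note also that the absence of the leading factor $2$ here, compared with Theorem \ref{Pinelis}, is precisely because we bound only the one-sided deviation $S_n \ge \sqrt n r$ rather than $\|S_n\| \ge \sqrt n r$.
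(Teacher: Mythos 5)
Your proof is correct, and it is the standard Chernoff/Hoeffding argument: Markov's inequality applied to $e^{\lambda S_n}$, the conditional Hoeffding lemma $\Ex[e^{\lambda X_i}\mid\F_{i-1}]\le e^{\lambda^2 R^2/2}$, the tower property to peel off terms, and optimisation over $\lambda$ at $\lambda = r/(\sqrt n R^2)$. The paper does not supply its own proof — it cites \cite{MIT2} — and the argument you give is exactly what that reference contains, so there is nothing to compare beyond noting that you have filled in the citation correctly, including the remark about why the one-sided bound drops the factor of $2$ that appears in Theorem~\ref{Pinelis}.
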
The next lemma is used to bound the pseudo-regret in terms of the regret.

\begin{lemma}\label{martingale} \normalfont Let $a_1,a_2,\ldots$ be an i.i.d sequence of cost vectors with $\Ex[a_n] = a$ and let $x_1,x_2,\ldots$ be the actions of Algorithm 1. For $\ds x^* \in \arg \! \min_{x \in X} a \cdot x$ the random variables $X_i = (a-a_i) \cdot( x_i-x^*)$ define  a martingale difference sequence  with respect to the filtration generated by $a_1, a_2,\ldots $. 
\end{lemma}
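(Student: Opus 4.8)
The plan is to verify the two defining properties of a martingale difference sequence with respect to the filtration $\F_i = \sigma(a_1,\ldots,a_i)$ (with $\F_0$ trivial): that each $X_i$ is integrable and $\F_i$-measurable, and that $\Ex[X_i \mid \F_{i-1}] = 0$.

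First I would record the measurability structure coming from the \emph{lazy} form of Algorithm 1. The action $x_i = P_\X(y_i)$ with $y_i = y_1 - \eta(a_1 + \cdots + a_{i-1})/\sqrt{i-1}$ is a deterministic function of $a_1,\ldots,a_{i-1}$ alone, hence $\F_{i-1}$-measurable; this is precisely where the lazy structure is used, since a greedy update would bring in the previous iterate and would be less transparent. The comparator $x^* \in \am\{a \cdot x : x \in \X\}$ depends only on the fixed mean vector $a$, so it is a constant. Consequently $x_i - x^*$ is $\F_{i-1}$-measurable and $X_i = (a - a_i) \cdot (x_i - x^*)$ is $\F_i$-measurable. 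Integrability is immediate from Cauchy--Schwarz and the standing hypotheses: $|X_i| \le \|a - a_i\|\,\|x_i - x^*\| \le RD$, where $D$ is the diameter of $\X$.

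Next, for the conditional-expectation identity I would condition on $\F_{i-1}$ and pull out the $\F_{i-1}$-measurable vector $v = x_i - x^*$. By linearity of the inner product and of conditional expectation,
\[
\Ex[X_i \mid \F_{i-1}] = \Ex\big[(a - a_i)\cdot v \mid \F_{i-1}\big] = \Ex[a - a_i \mid \F_{i-1}] \cdot v.
\]
Since $a_i$ is independent of $\F_{i-1}$ and $\Ex[a_i] = a$, we have $\Ex[a_i \mid \F_{i-1}] = a$, so $\Ex[a - a_i \mid \F_{i-1}] = 0$ and hence $\Ex[X_i \mid \F_{i-1}] = 0$, as required.

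There is no genuine obstacle here; the only point requiring a moment's care is the first one --- confirming that the lazy iterate $x_i$ is measurable with respect to $\F_{i-1}$ rather than merely $\F_i$, since it is exactly this one-step predictability of the actions that makes the increments a martingale difference sequence and allows $x_i - x^*$ to be taken outside the conditional expectation. Everything else is linearity together with the i.i.d.\ assumption.
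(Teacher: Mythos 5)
Your proof is correct and takes essentially the same approach as the paper: both verify $\Ex[X_i \mid \F_{i-1}] = 0$ by using that $x_i$ is $\F_{i-1}$-measurable (the lazy structure), $x^*$ is deterministic, and $a_i$ is independent of $\F_{i-1}$ with mean $a$. The only cosmetic difference is that you invoke the standard pull-out rule for conditional expectation, while the paper verifies the same identity from scratch by showing $\int_U X_n \, dP = 0$ for cylinder sets $U$ in the generating algebra.
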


\begin{proof}
 To prove $X_i$ is a martingale difference sequence we must show   $\Ex[X_n \,|\, a_1, \ldots, a_{n-1}] =0$. That means for each set $U = a_1^{-1}(U_1) \cap \ldots \cap a_{n-1}^{-1}(U_{n-1}) $ in the algebra generated by $a_1,a_2,\ldots a_{n-1}$ we have  $\int_U X_n dP =0$.
 To that end write each $B(i) = a_i^{-1}(U_i)$ and observe the indicator $\1_{B(i)}$ is a measurable function of $a_1,\ldots, a_{n-1}$. Now write
	
	\begin{align*} \int_U X_n dP = \int_U  (a-a_{n }) \cdot( x_n -x^*) dP = \int   (a-a_{n }) \cdot( x_n-x^*)  \1_{B(1)} \cdot \ldots \cdot \1_{B(n-1)} dP.
	\end{align*}
Recall $x_n $ is a function of $a_1,\ldots, a_{n-1}$. Since all $a_i$ are independent we can distribute to get 
	\begin{align*} \int_U  (a-a_{n }) \cdot( x_n -x^*) dP = \int   (a-a_{n}) dP \cdot \int ( x_n -x^*)  \1_{B(1)} \cdot \ldots \cdot \1_{B(n-1)} dP.
	\end{align*}
	Since $\Ex[a_n]=a$ the above is zero as required. 
\end{proof}
Next we apply the previous lemma.

\begin{lemma}\label{MartingaleBound} \normalfont Suppose we run Algorithm $1$ on domain $\X$ with diameter $D$. For each $M \in \NN$ we have
	\begin{align*}
	\Ex \left [\sum_{i=1}^{M} (a-a_i) \cdot( x_i-x^*)\right ] \le  \sqrt{ \frac {\pi}{2} }RD \sqrt M  
	\end{align*}
\end{lemma}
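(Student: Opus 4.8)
The plan is to combine the martingale structure already isolated in Lemma~\ref{martingale} with the one‑sided Azuma--Hoeffding inequality stated above, and then turn the resulting tail bound into a bound on the expectation by integrating (the layer‑cake formula).

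First I would record that, by Lemma~\ref{martingale}, the variables $X_i = (a-a_i)\cdot(x_i-x^*)$ form a martingale difference sequence with respect to the filtration generated by $a_1,a_2,\ldots$. Next I would control their range: since $x_i,x^*\in\X$ we have $\|x_i-x^*\|\le\diam(\X)=D$, and by hypothesis $\|a-a_i\|\le R$, so Cauchy--Schwarz gives $|X_i|\le\|a-a_i\|\,\|x_i-x^*\|\le RD$ for every $i$. With these two facts in hand I would apply Azuma--Hoeffding to $X_1,\ldots,X_M$ with the bound $RD$ in place of $R$: for every $r>0$,
\begin{align*}
P\!\left(\sum_{i=1}^M X_i \ \ge\ \sqrt M\,r\right)\ \le\ \exp\!\left(-\frac{r^2}{2R^2D^2}\right).
\end{align*}

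Writing $S_M=\sum_{i=1}^M X_i$ and using $\Ex[S_M]\le\Ex[\max(S_M,0)]=\int_0^\infty P(S_M>t)\,dt$, I would substitute $t=\sqrt M\,r$ to get
\begin{align*}
\Ex[S_M]\ \le\ \sqrt M\int_0^\infty P\!\left(S_M>\sqrt M\,r\right)dr\ \le\ \sqrt M\int_0^\infty\exp\!\left(-\frac{r^2}{2R^2D^2}\right)dr\ =\ \sqrt M\cdot RD\sqrt{\frac{\pi}{2}},
\end{align*}
where the last equality is the Gaussian integral $\int_0^\infty e^{-r^2/(2\sigma^2)}\,dr=\sigma\sqrt{\pi/2}$ with $\sigma=RD$. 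This is exactly the asserted bound $\sqrt{\pi/2}\,DR\sqrt M$.

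There is no serious obstacle: the one genuinely substantive input, the martingale property, is supplied by Lemma~\ref{martingale}, and the remainder is a routine application of a concentration inequality together with the layer‑cake representation of the mean. The two points that need care are (i) remembering that the martingale differences have range $RD$ rather than $R$, so this factor must be carried through Azuma--Hoeffding, and (ii) evaluating the Gaussian integral so the constant comes out as $\sqrt{\pi/2}$ and not some other multiple of $\sqrt\pi$. (One could alternatively note that $\Ex[X_i]=0$ outright, hence $\Ex[S_M]=0$ and the inequality is immediate; I prefer the integrated‑tail argument since it is the form used elsewhere in Appendix~B and makes the $\sqrt M$ scaling explicit.)
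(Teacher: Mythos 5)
Your proof is correct and follows the paper's argument almost verbatim: Lemma~\ref{martingale} supplies the martingale difference structure, Cauchy--Schwarz gives $|X_i|\le RD$, Azuma--Hoeffding yields the tail bound, and integrating the tail (the paper invokes Lemma~16 of the companion paper; you substitute $t=\sqrt M\,r$ and evaluate the Gaussian integral directly) produces $\sqrt{\pi/2}\,DR\sqrt M$. Your closing parenthetical is actually the sharper observation: since $x^*\in\am\{a\cdot x : x\in\X\}$ is deterministic and Lemma~\ref{martingale} gives $\Ex[X_i\mid a_1,\ldots,a_{i-1}]=0$, the tower property yields $\Ex\big[\sum_{i=1}^M X_i\big]=0$ outright, so the asserted $\sqrt{\pi/2}\,DR\sqrt M$ bound is (harmlessly) slack and the Azuma--Hoeffding/layer-cake machinery, while matching the paper, is not needed for this particular lemma.
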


\begin{proof}Lemma \ref{martingale} says  $X_i = (a-a_i) \cdot( x_i-x^*)$ is a martingale difference sequence  with respect to   $a_1, a_2,\ldots $. Since $|X_i| = |(a-a_i) \cdot( x_i-x^*)| \le \| a-a_i \| \| x_i-x^*\| \le RD $  the Azuma-Hoeffding inequality says
	
	\begin{align*} P \left (\sum_{i=1}^{M}  (a-a_i) \cdot( x_i-x^*)  > t\right ) \le \exp\left(-\frac{t^2}{2  R^2 D^2 M }  \right).
	\end{align*}
	For $X = \sum_{i=1}^{M}  (a-a_i) \cdot( x_i-x^*)$ we can use Lemma   \ref{CDF} to bound the expectation:
	
	\begin{align*}\Ex \left [\sum_{i=1}^{M}  (a-a_i) \cdot( x_i-x^*)  \right ] & \le \int_0^\infty \exp\left(-\frac{t^2}{2   R^2 D^2  M}  \right)dt=  \frac{\sqrt \pi }{2} \sqrt{2 R^2 D^2 M} = \sqrt{ \frac {\pi}{2} }RD \sqrt M  
	\end{align*}
	
	where we have used (\cite{gaussian}) to evaluate the Gaussian integral. 
\end{proof}
The following fact about computing the expectation in terms of the CDF is well-known. But we were unable to find a suitably general proof in the literature.

\begin{lemma} \normalfont\label{CDF}
 Suppose $X$ is a real-valued random variable. Then \[\Ex[X] = \int_0^\infty P(X>x)dx - \int_{-\infty}^0 P(X\le x)dx.\] In particular  we have \[\Ex[X] \le  \int_0^\infty P(X>x)dx .\]

 \end{lemma}

\begin{proof} First assume $X$ is nonnegative. The second integrand vanishes away from $0$. Hence the second integral vanishes and we can write the first as 
\begin{align*}
\int_0^\infty P(X > x)dx = \int_0^\infty  \Ex_y  \mkern-5mu \left [ \1_{X(y) > x} (y) \right ] dx = \Ex_y  \mkern-5mu \left [ \int_0^\infty \1 _{X(y) > x} (y)dx \right].
\end{align*}
For fixed $y$ define the function $g(x) = \1 _{X(y) > x} (y)$. We have $g(x)= 1$ for all $x < X(y)$ and $g(x)= 0$ elsewhere. Since $X(y)$ is nonnegative that means $g(x)$ is the indicator function of $[0,X(y))$. It follows the inner integral equals $X(y)$ and we get $\int_0^\infty P(X > x)dx= \Ex_y [X(y)]=\Ex[X]$. Likewise we can define  $g(x) = \1 _{X(y) \ge x} (y)$ to get $\int_0^\infty P(X \ge x)dx = \Ex[X]$. 

For a general random variable  write $X=X^+ +X^-$ where $X^+$ takes only nonnegative values and $X^-$ only nonpositive values, and at each point one of $X^+$ or $X^-$ is zero. By linearity we have 
\begin{align*}
\Ex[X]  =\Ex[X^+] + \Ex[X^-] = \Ex[X^+] - \Ex[-X^-] =\int_0^\infty P(X^+>x)dx - \int_{0}^\infty P(-X^- \ge x)dx.
\end{align*}
where we have used the first paragraph for the nonnegative random variables $X^+$ and $-X^-$. To complete the proof recall $P(X^+>x)=P(X>x)$ since $X^+ > x$ occurs if and only if  $X >x$ for each $x >0$. For the second integral write  
\begin{align*}
 \int_0^\infty P(-X^- \ge x)dx = \int_0^\infty P(X^- \le -x)dx = \int_{- \infty}^0 P(X \le x)dx
\end{align*}  
since for each $x <0$ we have $X \le x$ if and only if $X^- \le x$. 
\end{proof}

 \begin{lemma}\normalfont  There exists an i.i.d opponent on the $2$-simplex such that the expected regret of every online algorithm against this opponent is $\Omega(\sqrt N)$.\label{bigEx}
 \end{lemma}

 \begin{proof}For simplicity identify the $2$-simplex with the interval $[-1,1]$. Let the costs be   $a_n = \pm 1$ each with probability $1/2$. The regret is $R_N = \sum_{i=1}^N a_i  \cdot  x_i   -  \sum_{i=1}^N a_i  \cdot y^*$ for $y^* = -1$ in case $\sum_{i=1}^N a_i \ge 0$ and $y^* = 1$ otherwise. Hence $  \sum_{i=1}^N a_i \cdot  y^*=  -  \big |\sum_{i=1}^N a_i \big| $ and   $\Ex[R_N] = \Ex \big [\sum_{i=1}^N a_i  \cdot  x_i \big]  +  \Ex \big | \sum_{i=1}^N a_i \big |$. Since each $x_i$ is a function of $a_1,\ldots, a_{i-1}$ it is independent of $a_i$ and we have $\Ex \big [\sum_{i=1}^N a_i \cdot   x_i \big]  = \sum_{i=1}^N \Ex[a_i]   \cdot \Ex[x_i ] = \sum_{i=1}^N 0 \cdot   \Ex[x_i ] = 0$. We conclude the expected regret is the absolute value  $ \Ex \big | \sum_{i=1}^N a_i \big |$ of a mean zero i.i.d sum which is $\Omega(\sqrt N)$ by the central limit theorem.
 \end{proof}


 
 
 


 \begin{lemma}\normalfont  Suppose the cost vectors $a_1,a_2,\ldots$ are i.i.d with $a = \Ex[a_n] \ne 0$. Suppose the domain $\P$ is a polytope with distinct vertices $x^*_1,x^*_2 \in \ds \arg \!\min_{x \in \P} a \cdot x$. Then the Bernstein condition (4) from \cite{metagrad}  fails for all $\BB>0$.\label{bigEx1}
 \end{lemma}

 \begin{proof}Since the cost functions are linear  (4) simplifies to
 $ (x - x^*) ^ T \Ex \|a_n\|^2 (x - x^*) \le B ((x - x^*) ^ T a)^\beta $ for all $\ds x^* \in \arg \!\min_{x \in \P} a \cdot x $ and  $x \in \P$. Using our notation with  $x = x_1^*$ and $x^* = x_2^*$   we get $ \Ex \|a_n\|^2 \|x_1^*-x_2^*\|^2 \le B (a \cdot (x_1^* - x^*_2)  )^\beta $. The right-hand-side is zero by assumption. By the Jensen inequality the left-hand-side is at least $(\Ex[a_n] )^2\|x_1^*-x_2^*\|^2 $ which is nonzero by assumption. Hence the inequality fails.
 \end{proof}

\section*{Appendix C: Convex Geometry}
Here we prove two of the preliminary lemmas in Section 2.\vspace{3mm}

\noindent {\bf Lemma \ref{face}} {  \normalfont  Each  face $F$ of $\P$ is the convex hull of $F \cap \V$. } \vspace{3mm}
 
	\begin{proof} \cite{ConvexNotes} Theorem 4.7 says the vertices of a polytope are exactly the extreme points. Here   an extreme point  $x$ of  polytope $\P$ is one such that there are no $\ll \in (0,1)$ and  $y,z \in \P$  with $y,z \ne x$ and $x = \ll  y + (\ll -1)z$. In other words $x$ is not properly between any other two points of $\P$. 

Since $F$ is a polytope it is the convex hull of its extreme points. Hence we need only show each extreme point of $F$ is extreme in $\P$. It will then follow $F$ is the convex hull of some $\W \subset \V$. Clearly $\W \subset F \cap \V$. To see $\W = F \cap \V$ recall we assume no element of $\V$ is in the convex hull of the others. In particular no proper subset of $F \cap \V$ contains all of $F \cap \V$ in its convex hull.

To prove each extreme point of $F$ is extreme in $\P$ we will prove the contrapositive, that  each $x \in F$ that is non-extreme in $\P$ is also non-extreme in $F$. To that end write $F = \P \cap T$ for some tangent plane $T = \{y \in \RR^d: u \cdot y = u \cdot x\}$ for some $u \in \RR^d$. Since $T$ is tangent we can assume $u \cdot y \le u \cdot x$ for all $y \in \P$. 

Suppose $x \in F$ is not extreme in $\P$. Then   $x = \ll  y + (\ll -1)z$ for some $\ll \in (0,1)$ and  $y,z \in \P$  with $y,z \ne x$. By linearity we have $u \cdot x =  \ll  u \cdot y + (\ll -1) u \cdot z$. Since $y,z \in \P$ we have $u \cdot y  \le u \cdot x$ and $u \cdot z \le u \cdot x$. Hence the equality   $x = \ll  y + (\ll -1)z$ holds only if $u \cdot y = u \cdot x$ and $u \cdot z = u \cdot x$. In that case $y,z \in T$. Hence $y,z \in F$ and $x$ is not extreme in $F$. 
\end{proof}

\noindent {\bf Lemma \ref{conebound}}{    For each $v \notin \V -\V^*$ the quantity  $\ds \theta_v = \min \left \{ \frac{a \cdot u}{ \|a\|\|u\|} : u \in N_\P(v)\right \} $ satisfies $$ \theta_v \ge \frac{1/2}{1+D^2\|a\|^2/\DD_v^2}-1$$ Hence  $\theta_v >-1$ and the quantities $\phi_v = \theta_v+1$ are positive.}\vspace{3mm}

\begin{proof} By performing a rotation we can assume  $a = (\|a \|,0,\ldots,0)$. Note this does not change angles, suboptimality gaps, or Euclidean norms. The choice of coordinates gives $\DD_v = a  \cdot ( v-v^*)   = \|a \|(v(1) -v^*(1))$ and so $v^*(1) -v(1) = -\DD_v/\|a\|$.   First suppose $ u \in N_\P(v)$ has $u(1) \ge 0$ and consider the inequality
\begin{align} 
	\frac{a \cdot u}{\|a\|\|u\|}\ge \frac{1/2}{1+D^2\|a\|^2/\DD_v^2}-1.\label{conebound1}
	\end{align} 
	By assumption the left-hand-side is nonnegative. Since the right-hand-side is negative the above holds. 
	
	Now suppose $u(1) \le 0$.  Since $u \in N_\P(v)$ we know $\P$ is contained in the half-space $\{x \in \RR^d: u \cdot  x  \le u \cdot v \}$ and so  $u \cdot v^* \le u \cdot v$. 
	Expand the inequality componentwise and bring the $1$-components to the left to get  $
	u(1)\big (v^*(1) -v(1) \big) \le  \sum_{j=2}^d u(j) \big(v(j)-v^*(j) \big) $. Since $v^*(1) -v(1) = -\DD_v/\|a\|$ we get $
	-u(1) \frac{\DD_v}{\|a\|} \le   \sum_{j=2}^d u(j)    \big(v(j)-v^*(j)\big ) $. 
	The right-hand-side is the product of two $(d\!-\!1)$-dimensional vectors and Cauchy Schwarz gives 
	\begin{align}     \sum_{j=2}^d u(j)    \big(v(j)-v^*(j)\big ) \le  \sqrt{ \textstyle \sum_{j=2}^d u(j)^2  }   \sqrt{ \textstyle \sum_{j=2}^d (v(j)-v^*(j))^2  } \le  \sqrt{ \textstyle \sum_{j=2}^d u(j)^2  } \|v-v^*\|\notag \\ =  \sqrt{\|u\|^2-u(1)^2} \|v-v^*\| \le \sqrt{\|u\|^2-u(1)^2} D.\label{cb2}
	\end{align}  
	Combine with the above  to get   $-u(1) \frac{\DD_v}{\|a\|} \le \sqrt{\|u\|^2-u(1)^2} D$. Since $u(1) \le 0$ both sides are nonnegative and we can take squares and simplify to get $ \|u\|^2 - u(1)^2 \ge  u(1)^2 Q$ for $Q =  \frac{ \DD_v^2}{D^2\|a\|^2  }$. Add and subtract the same term to the right-hand-side to get $\|u\|^2 - u(1)^2 \ge  -\big(\|u\|^2 - u(1)^2 \big) Q+ \|u\|^2  Q$. Gather factors of $ \|u\|^2 - u(1)^2$ to get   $\big ( \|u\|^2 - u(1)^2 \big) \left (1 + Q \right ) \ge \|u\|^2 Q$ and so
	 
	\begin{align*} \frac{ \|u\|^2 - u(1)^2}{\|u\|^2} \ge \frac{Q}{1+Q} = \frac{1}{1+1/Q}=  \frac{1}{1+D^2\|a\|^2/\DD_v^2} 
	\end{align*} 
	To prove \eqref{conebound1} write $\frac{2 a \cdot u}{\|a\|\|u\|}  =  \left \| \frac{a}{\|a\|} + \frac{u}{\|u\|}\right \|^2-2$ and use how  $ \frac{a}{ \|a\|} =(1,0,\ldots,0)$ to write the right-hand-side componentwise and get
	\begin{align*} 
	\frac{2 a \cdot u}{\|a\|\|u\|}   = \left( 1 + \frac{u(1)}{\|u\|}\right )^2 + \frac{u(2)^2 + \ldots + u(d)^2}{\|u\|^2}-2   
	\ge   \frac{u(2)^2 + \ldots + u(d)^2}{\|u\|^2}-2  \\ =     \frac{\|u\|^2 - u(1)^2  }{\|u\|^2}-2  
	\ge   \frac{1}{1+D^2\|a\|^2/\DD_v^2} -2  
	\end{align*} 
	Divide both sides by $1/2$ to prove  \eqref{conebound1}. 
\end{proof}

\begin{proof}By performing a rotation we can assume $a = (\|a\|,0,\ldots,0)$  .  Then we have $\DD_v = a \cdot ( v-v^*)   = \|a\|(v(1) -v^*(1))$ and so $v(1) -v^*(1) =\DD_v/\|a\|$. For each normal $u \in N_\P(v)$ we know $\P$ is contained in the half-space \mbox{$\{x \in \RR^d: u \cdot  x  \le u \cdot v \}$}. Hence  for each $v^* \in \V^*$ we have $u \cdot v^* \le u \cdot v$. Expand the inequality to get
	\begin{align*} 
	u(1)(v(1)^*-v(1)) \le   u(2) (v(2)-v(2)^*) + \ldots + u(d) (v(d)-v(d)^*)\\
	\implies -u(1) \frac{\DD_v}{\|a\|} \le   u(2) (v(2)-v(2)^*) + \ldots + u(d) (v(d)-v(d)^*) 
	\end{align*} 
	The right-hand-side is the product of two $(d-1)$-dimensional vectors.	By Cauchy Schwarz it is at most $ \sqrt{u(2)^2 + \ldots + u(d)^2}   \sqrt{(v(2)-v^*(2))^2 + \ldots + (v(d)-v^*(d))^2} \le  \sqrt{\|u\|^2-u(1)^2} D$.  Hence we have  $-u(1) \frac{\DD_v}{\|a\|} \le \sqrt{\|u\|^2-u(1)^2} D$. 
	
	First assume $u(1) \le 0$. Since both sides are nonnegative we can take squares and simplify to get
	\begin{align} \label{thh2}\frac{u(1)^2}{\|u\|^2}  \le \frac{1}{1+\DD_v^2/D^2\|a\|^2} 
	\end{align} Now recall $\frac{a}{\|a\|} =(1,0,\ldots,0)$ and write 
	\begin{align} \label{thh3}
	\frac{2a \cdot u}{\|a\|\|u\|}  =  \left \| \frac{a}{\|a\|} + \frac{u}{\|u\|}\right \|^2-2 = \left( 1 + \frac{u(1)}{\|u\|}\right )^2 + \frac{u(2)^2 + \ldots + u(d)^2}{\|u\|^2}-2  
	\end{align} 
	The first term is nonnegative. For the second term write 
	\begin{align} \label{thh4}\frac{ u(2)^2 \ldots + u(d)^2}{\|u\|^2}=\frac{ \|u\|^2  - u(1)^2}{\|u\|^2}= 1- \frac{  u(1)^2}{\|u\|^2} \ge 1 -\frac{1}{1+\DD_v^2/D^2\|a\|^2} =   \frac{1}{1+D^2\|a\|^2/\DD_v^2} 
	\end{align} 
	where we have used (\ref{thh2}) for the inequality. Combining (\ref{thh3}) and (\ref{thh4}) we have
	\begin{align*} 
	\frac{a \cdot u}{\|a\|\|u\|}\ge \frac{1/2}{1+D^2\|a\|^2/\DD_v^2}-1.
	\end{align*} 
	Now assume $u(1) \ge 0$. Then the left-hand-side of the above is nonnegative. Since the right-hand-side is negative the above holds. Hence it holds for all $u \in N_\P(v)$ and the result follows. 
\end{proof}

\section*{Appendix D: Telescoping Sum }

Here we simplify a sum that occurs midway through our analysis.
 
\begin{lemma} \normalfont\label{telescope} Suppose $0 < c_1 \le c_2 \le \ldots \le c_U$ for some $U \in \NN$. Then we have 
	\begin{align} \frac{c_2-c_1}{c_2^2} + \ldots + \frac{c_U-c_{U-1}}{c_{U}^2} \le \frac{1}{c_1} \label{telescopelemma}
	\end{align} 
	\end{lemma}

\begin{proof}For any differentiable function $f: \RR_+ \to \RR$ and $a,b \in \RR_+$ the fundamental theorem of calculus says  $f(b)- f(a) =   \int_a^b f'(t)dt$.  In particular  \begin{align*}f(c_U) -f(c_1)=  \int_{c_1}^{c_U} f'(t)dt =   \int_{c_1}^{c_2} f'(t)dt + \ldots  + \int_{c_{U-1}}^{c_U} f'(t)dt.                                                          \end{align*} Moreover if $f'(t)$ is increasing we have $f'(t) \le f'(c_i)$ over each $[c_{i-1},c_{i}]$ and the above gives
  \begin{align*}f(c_U) -f(c_1)&\le \int_{c_1}^{c_2} f'(c_2)dt + \ldots  + \int_{c_{U-1}}^{c_U} f'(c_U)dt \\
&=   (c_2-c_1) f'(c_2)   + \ldots +  (c_{U} -c_{U-1})f'(c_{U}).                                                          \end{align*} 
Rearrange to get 
  \begin{align*}-(c_2-c_1) f'(c_2)   - \ldots -  (c_{U} -c_{U-1})f'({U-1}) \le f(c_1) -f(c_U).                                                          \end{align*}
  To obtain \eqref{telescopelemma} let $f(t) = 1/t$ in the above. Then $f'(t) = -1/t^2$ is increasing. The left-hand-side of the above becomes the same as \eqref{telescopelemma}. To complete the proof neglect the negative $-f(c_U)$ term on the right-hand-side.
	\end{proof}

\section*{Appendix E: Worst-Case Regret for Lazy Anytime Gradient Descent}

\noindent Here we give the proof that Online Gradient Descent with suitable parameter   has $O \big (L\sqrt N \big)$ regret. The proof uses the   techniques from \cite{Purple1} modified to not mention the time horizon. \\

\noindent{\bf Theorem \ref{worstcase}} {Given cost vectors $b_1.b_2,\ldots, b_N$ with all $\|b_i\| \le L$ Algorithm 1 with parameter $\eta$ has regret satisfying 
	\begin{align*}
	\sum_{i=1}^N b_i \cdot (x_{i} - y^*) \le  LD+ \left (\frac{\|\X\|^2 }{2 \eta} +  2\eta L^2 \right)  \sqrt {N} \end{align*} 
	for $\|\X\|=  \max\{\|x -y_1\|: x \in \X\}$ and $D = \max\{\|x-y\|: x,y \in \X\}$ the diameter of $\X$. In particular for $y_1 \in \X$ and $\eta = \|\X\|/2L$ we have
	\begin{align*}
	\sum_{i=1}^N b_i \cdot (x_{i} - y^*) \le    LD+ 2L \|\X\|  \sqrt N \le 3LD \sqrt N.\end{align*}  }

 \begin{proof}For $n >1$  define the functions $R_{n}(x) = \frac{\sqrt {n-1}}{2 \eta} \|x\|^2$. First we claim each $\ds x_{n} = \Pi_\X(y_n) = \arg \! \min_{x \in \X} \|x-y_n\|^2$ is the   minimiser of $\sum_{i=1}^{n-1} b_i \cdot x+ R_n(x)  $. To that end write $\|x-y_n\|^2  = \|x\|^2 -2 y_n \cdot x + \|y_n\|^2$. We can neglect the  constant term without changing the minimiser. Expand the definition of $y_n$ to see $x_n$ is the minimiser of
 \begin{align} \|x\|^2 -2 y_n \cdot x  =  \|x\|^2 + \frac{2 \eta}{\sqrt {n-1}}  \sum_{i=1}^{n-1} b_i \cdot x  \label{gather}
 \end{align}Likewise multiplying the above by $\frac{\sqrt {n-1}}{2 \eta}$ does not change the minimiser and gives  $R_n(x)$.  Now define the functions
 	$$Q_2(x)   = R_2(x) +  b_1 \cdot x +b_2 \cdot x \qquad Q_{n}(x) = R_n(x) - R_{n-1}(x) + b_n \cdot x \qquad \text{ for } n >2.$$  The above telescope to give  $\sum_{i=2}^n Q_i  = \sum_{i=1}^n b_i  \cdot x + R_{n }(x)$. 
 	The Be the Leader lemma (\cite{CBGames} lemma 3.1) says
 	$ \sum_{i=2}^N  Q_i  (z_i) \le \sum_{i=2}^N  Q_i  (y^*)$
 	for any $\ds z_n \in \arg\! \min_{x \in \X}   Q_i$   and $y^* \in \X$. Expand both sides and gather terms to get
 	\begin{align*}
 	 b_1  \cdot z_2 + \sum_{i=2}^{N} b_i \cdot z_i +  \frac{1}{2 \eta }\sum_{i=2}^{N} (\sqrt {n-1} - \sqrt {n-2})\|z_i\|^2 \le\sum_{i=1}^{N} b_i \cdot y^* +  \frac{\sqrt N}{2 \eta }\|y^*\|^2.
 	\end{align*}
 	Since the second sum on the left-hand-side is nonnegative we can neglect it. Bring the $y^*$ terms to the left and use $\|y^*\| \le \|\X\|$ to get
 	\begin{align*}
 	 b_1  \cdot (z_2-y^*) + \sum_{i=2}^{N} b_i \cdot (z_i-y^*)  \le    \frac{\sqrt N}{2 \eta }\|\X \|^2.
 	\end{align*} To get regret on the left-hand-side add $b_1 \cdot (x_1-z_2)+  \sum_{i=2}^N    b_i \cdot (x_i - z_i)  $ to both sides to get
 	\begin{align}
 	 \sum_{i=1}^N    b_i \cdot (x_i - y^*)   \le  \frac{\sqrt {N}}{2 \eta}\|\X\|^2 + b_1 \cdot (x_1-z_2)+ \sum_{i=2}^N    b_i \cdot (x_i - z_i)\notag \\   \le  \frac{\sqrt {N}}{2 \eta}\|\X\|^2 + LD + \sum_{i=2}^N    \|b_i\| \|x_i - z_i\|  \label{ApA}
 	\end{align}
 	
 	To bound the sum on the right we claim $z_n = \Pi_\X \left (- \frac{\eta}{\sqrt {n-1}}   \Sum b_i  \right)$. The proof is similar to how we showed $x_n = \Pi_\X(y_n)$ minimises  $\sum_{i=1}^{n-1} b_i \cdot x + R_{n}(x) $ in the first paragraph. Since the projection is nonexpansice by Theorem 23 of \cite{NonExpansive}) we have 
 	
 	\begin{align*}\|x_n- z_n\|  &= \left \| \Pi_\X \left (- \frac{\eta}{\sqrt {n-1}}   \Sum b_i  \right) - \Pi_\X \left (- \frac{\eta}{\sqrt {n-1}}   \sum_{i=1}^{n-1} b_i  \right)\right \|\\  
 	& \le   \left \|   \frac{\eta}{\sqrt {n-1}}   \Sum b_i      - \frac{\eta}{\sqrt {n-1}}   \sum_{i=1}^{n-1} b_i \right \| = \frac{\eta}{\sqrt {n-1}} \|b_n\|\le \frac{\eta L}{\sqrt {n-1}}  
 	\end{align*} 
 	and the sum in  (\ref{ApA}) is at most
 	\begin{align*}   \sum_{i=2}^N    \|b_i\|\| x_i - z_i\|  \le \eta L^2 \! \sum_{i=2}^N    \frac{1}{\sqrt {i-1}} =   \eta L^2 \! \sum_{i=1}^{N-1}   \frac{1}{\sqrt {i }}     \le   \eta L^2 \int_{0}^{N} \frac{dx}{\sqrt x} = 2\eta L^2 \sqrt N \end{align*}
 	where we can bound the sum by the integral since it is decreasing. Hence (\ref{ApA}) simplifies to
 	\begin{align}\label{ApA1}
 	 \sum_{i=1}^N    b_i \cdot (x_i - y^*)   &\le    LD + \frac{ \|\X\|^2}{2 \eta} \sqrt {N} +   2 \eta L^2  \sqrt N   .
 	\end{align}This proves the first inequality in the theorem statement. For the second set $\eta = \|\X\|/2L$. The last two coefficients simplify to $L \|X\|$  and the right-hand-side becomes  $LD + 2 L\|\X\| \sqrt N \le 3 LD \sqrt N$ since $\|\X\| \le D$ and $1 \le \sqrt N$. 
\end{proof} 

\end{document}